\documentclass{article}

\usepackage[preprint]{neurips_2026}

\usepackage[utf8]{inputenc} 
\usepackage[T1]{fontenc}    
\usepackage{hyperref}       
\usepackage{url}            
\usepackage{booktabs}       
\usepackage{amsfonts}       
\usepackage{nicefrac}       
\usepackage{microtype}      
\usepackage{xcolor}         
\usepackage{graphicx}
\usepackage{amsthm}
\newtheorem{proposition}{Proposition}
\newtheorem{remark}{Remark}
\newtheorem{corollary}{Corollary}

\PassOptionsToPackage{numbers}{natbib}

\title{Patch-PODiff-ViT: Structured Latent Diffusion with Patchwise POD for Super-Resolution and Uncertainty Quantification}

\author{%
  Onkar Jadhav \\
  School of Earth and Oceans, UWA Oceans Institute\\
  University of Western Australia\\
  Crawley, WA, Australia \\
  \texttt{onkar.jadhav@uwa.edu.au} \\
  \And
  Tim French \\
  School of Physics, Mathematics and Computing, Computer Science and Software Engineering \\
  University of Western Australia\\
  Crawley, WA, Australia \\
  \AND
  Matthew Rayson \\
  School of Earth and Oceans, UWA Oceans Institute\\
  University of Western Australia\\
  Crawley, WA, Australia \\
  \And
  Nicole L. Jones \\
  School of Earth and Oceans, UWA Oceans Institute\\
  University of Western Australia\\
  Crawley, WA, Australia \\
}

\begin{document}

\maketitle

\begin{abstract}
Diffusion models enable probabilistic super-resolution and conditional generation, but pixel-space methods are computationally expensive and learned latent spaces often lack interpretable uncertainty quantification. 
We introduce Patch-PODiff-ViT, a structured latent diffusion framework in which the latent space is defined by patchwise Proper Orthogonal Decomposition (POD), a fixed linear orthonormal basis over local patches, rather than learned by a nonlinear autoencoder. This yields low-dimensional, variance-ordered tokens that preserve spatial structure and enable efficient diffusion in a structured low-dimensional latent space with a Vision Transformer. Because the decoder is fixed, linear, and orthonormal, latent coefficient uncertainty can be propagated directly to physical-space predictive variance, enabling analytic propagation of predictive variance through the linear decoder without Monte Carlo estimation in pixel space. Across sea surface temperature, medical imaging, and natural images, the method achieves strong reconstruction with fewer parameters and lower memory, while producing well-calibrated spatial uncertainty that closely matches empirical ensembles.
\end{abstract}

\section{Introduction}
High-resolution spatial fields arise across a wide range of applications, including climate modeling \citep{price2023gencast, watt2024generative,Jadhav2025RCNN}, medical imaging \citep{moser2024diffusion}, and natural image generation \citep{rombach2022high}. While modern pipelines provide coarse observations at scale, resolving fine-scale spatial structure remains computationally expensive. Super-resolution methods address this challenge \citep{leinonen2020stochastic, stengel2020adversarial, saharia2022image, leinonen2023latent}, but accurate reconstruction alone is often insufficient. Reliable uncertainty quantification is essential for downstream decision-making, particularly in regimes with sharp gradients, localized extremes, and incomplete observations.

Diffusion-based models \citep{ho2020denoising,song2020score} provide a powerful framework for probabilistic super-resolution and conditional generation \citep{saharia2022image}, producing high-fidelity samples and enabling ensemble-based uncertainty estimation. However, operating in pixel space is computationally expensive at high resolutions, leading to large models, high memory usage, and slow sampling, which makes ensemble generation costly in practice \citep{ho2020denoising}.

Latent diffusion models mitigate this cost by operating in a lower-dimensional space learned via autoencoders \citep{rombach2022high, vahdat2021score}. While effective for natural images, these latent representations are typically nonlinear and lack a direct correspondence to spatial structure, making uncertainty propagation to physical-space predictive variance less interpretable \citep{bohm2019uncertainty} and limiting their applicability in settings where structure and interpretability are important.
In contrast, many spatial fields exhibit strong local structure that can be efficiently represented using linear reduced-order methods such as Proper Orthogonal Decomposition (POD) \citep{sirovich1987turbulence, berkooz1993proper,benner2015survey}. POD yields an orthonormal, variance-ordered basis that captures dominant spatial patterns and defines a geometrically meaningful latent space, where coefficients correspond to progressively finer scales. Importantly, this structure is often more pronounced at the local level: individual $p \times p$ patches can be represented using far fewer modes than the full field while retaining the same variance, enabling substantial compression. Such local structure is not limited to physical systems, and is often observed in medical and natural images, particularly at the patch level.

Despite its widespread use in scientific computing \citep{coscia2024generative, du2024conditional}, POD remains underexplored as a structured latent space for diffusion-based generative modeling. Although it provides a linear and interpretable mapping between latent coefficients and spatial fields, this structure has not been fully leveraged for efficient and principled uncertainty propagation.

In this work, we introduce Patch-PODiff-ViT, an extension of PODiff \citep{jadhav2026podiff}, a conditional diffusion framework operating in a structured latent space defined by patchwise POD representations. Instead of learning a latent space, we construct a fixed, variance-ordered basis over local patches, yielding low-dimensional tokens that preserve spatial locality and scale separation. Diffusion is performed over these tokens using a Vision Transformer denoiser, enabling global spatial reasoning with improved efficiency. Crucially, the linear POD structure allows predictive uncertainty in latent space to be propagated analytically to the physical domain, providing a tractable and interpretable alternative to explicit full-resolution covariance estimation without additional learned components.

This formulation connects reduced-order and generative modeling, showing that structured linear representations combined with expressive denoisers enable efficient and interpretable probabilistic inference. Unlike pixel-space diffusion, it scales to high-resolution fields with lower computational cost, and unlike learned latent diffusion, it preserves a direct and tractable link between latent variables and spatial statistics.

We evaluate the method on three domains: sea surface temperature (SST) downscaling, medical image super-resolution, and natural image reconstruction. Across all datasets, Patch-PODiff-ViT achieves strong reconstruction performance with fewer parameters and lower memory than pixel-space diffusion, while producing well-calibrated, spatially meaningful uncertainty estimates that closely match empirical ensemble statistics.



    
    

\paragraph{Contributions.}
This paper makes four contributions.
(i) We introduce Patch-PODiff-ViT, a structured latent diffusion framework using patchwise POD and transformer-based denoising.
(ii) We provide a theoretical foundation: Proposition~\ref{prop:patch_error} bounds reconstruction error under variance truncation, and Proposition~\ref{prop:patch_uq} enables closed-form propagation of latent uncertainty to pixel space under a block-diagonal approximation.
(iii) We show improved computational efficiency via a low-dimensional structured latent representation.
(iv) We validate the method across geophysical, medical, and natural image datasets, achieving strong reconstruction and well-calibrated uncertainty.

\section{Method}
\label{sec:method}
Patch-PODiff-ViT performs conditional generative modeling in a structured latent space defined by patchwise POD, reducing fields from $H \times W$ pixels to $P \times K$ latent tokens (Figure~\ref{fig:pipeline}). Each image is decomposed into $P$ patches, projected onto a shared POD basis $\Phi$, and denoised by a Vision Transformer in token space. For super-resolution, conditioning and target fields are encoded in the same latent space. At inference, denoised coefficients are decoded linearly via $\hat{\mathbf{u}}_p = \bar{\mathbf{u}} + \Phi\hat{\mathbf{a}}_p$ and stitched to reconstruct the field, enabling analytic uncertainty propagation through the POD decoder.
\subsection{Patchwise POD Representation}
\label{sec:pod}
Let $\{\mathbf{u}_i\}_{i=1}^{N}$ denote a set of high-resolution training fields with $\mathbf{u}_i \in \mathbb{R}^{H \times W \times C}$. Each field is decomposed into patches of size $p \times p$, extracted with stride $r \leq p$, yielding
\[
    \mathbf{u}_i = \{\mathbf{u}_{i,p}\}_{p=1}^{P}, \quad
    \mathbf{u}_{i,p} \in \mathbb{R}^{s}, \quad s = C\cdot p^2,
\]
where $P$ denotes the number of patches per field. When $r = p$, the patches are non-overlapping; when $r < p$, they overlap. The patch size $p$ is treated as a hyperparameter, and its effect on reconstruction quality and spectral efficiency is studied in Appendix~\ref{app:Other_ablations}. For single-channel datasets $C = 1.$

We construct a global POD basis by pooling all training patches and computing the economy SVD of the centered patch matrix. The top-$K$ singular vectors form the orthonormal basis $\Phi \in \mathbb{R}^{s \times K}$ with singular values $\sigma_1 \geq \cdots \geq \sigma_{K} \geq 0$.
We select the truncation level $K$ to satisfy the energy criterion
\begin{equation}
    \frac{\sum_{k=1}^{K} \sigma_k^2}{\sum_{k=1}^{s} \sigma_k^2} \geq \eta,
    \qquad \eta = 0.99.
    \label{eq:energy}
\end{equation}
This choice is supported by Proposition~\ref{prop:patch_error}, which shows that the expected reconstruction error is bounded by $(1 - \eta)$ of the total patchwise variance. The shared patch basis keeps token dimensions fixed and improves statistical efficiency by pooling local patches.

\paragraph{Latent encoding.}
Let $\bar{\mathbf{u}}$ denote the global patch mean. Each patch is encoded as
\[
    \mathbf{a}_{i,p} = \Phi^\top
    (\mathbf{u}_{i,p} - \bar{\mathbf{u}}) \in \mathbb{R}^K.
\]
To normalize variance across modes, we standardize the coefficients per mode as
\[
    \tilde{\mathbf{a}}_{i,p} = \Lambda^{-1/2} \mathbf{a}_{i,p},
    \qquad \Lambda = \mathrm{diag}(\sigma_1^2, \dots, \sigma_K^2).
\]
The POD encoder is fixed and parameter-free. In particular, no gradients pass through it during training.


\subsection{Latent Diffusion over Patch Tokens}
\label{sec:diffusion}

\paragraph{Token sequence.}
We represent each field $\mathbf{u}_i$ as a sequence of $P$ latent tokens
\[
    \tilde{\mathbf{A}}_i =
    [\tilde{\mathbf{a}}_{i,1}, \dots, \tilde{\mathbf{a}}_{i,P}]
    \in \mathbb{R}^{P \times K}.
\]
The total latent dimensionality $P \times K$ is substantially smaller than the pixel-space dimensionality $H \times W$, enabling efficient diffusion over the full spatial extent of the field.

\paragraph{Forward process.}
We define the forward diffusion process as
\begin{equation}
    q(\tilde{\mathbf{A}}_t \mid \tilde{\mathbf{A}}_0) =
    \mathcal{N}\!\left(\sqrt{\bar{\alpha}_t}\,\tilde{\mathbf{A}}_0,\;
    (1 - \bar{\alpha}_t)\mathbf{I}\right),
    \label{eq:forward}
\end{equation}
where $\bar{\alpha}_t = \prod_{j=1}^{t} \alpha_j$ follows a cosine noise schedule \cite{nichol2021improved} over $T = 1{,}000$ steps \cite{ho2020denoising}. 

\paragraph{Training objective.}
We train a denoising network ${\epsilon}_\theta$ to predict the injected noise:
\begin{equation}
    \mathcal{L}(\theta) = \mathbb{E}_{\tilde{\mathbf{A}}_0,\, t,\,
     {\epsilon}}\!\left[
    \|{\epsilon} -
     {\epsilon}_\theta(\tilde{\mathbf{A}}_t, \mathbf{C}, t)\|_2^2
    \right],
    \label{eq:loss}
\end{equation}
where $\mathbf{C}$ denotes the conditioning signal derived from the low-resolution input (Section~\ref{sec:conditioning}), and $t \sim \mathrm{Uniform}\{1, \dots, T\}$.
\subsection{Conditioning on Low-Resolution Input}
\label{sec:conditioning}
For super-resolution, we upsample the low-resolution input $\mathbf{x}^{\mathrm{LR}}$ via bicubic interpolation, $\mathbf{x}^{\mathrm{up}} = \mathcal{U}(\mathbf{x}^{\mathrm{LR}}) \in \mathbb{R}^{H \times W}$, and encode it using the same patch-POD pipeline as the high-resolution fields, yielding a conditioning token sequence $\mathbf{C} \in \mathbb{R}^{P \times K}$. Encoding both HR and LR fields in the same latent space enables the denoiser to learn a structured residual in coefficient space. Each token $\mathbf{c}_p$ represents the POD coefficients of the upsampled LR patch at position $p$, providing spatially aligned conditioning.

We apply token-wise additive conditioning by projecting noisy HR tokens and LR tokens to $d_{\mathrm{model}}$ and combining them as
\begin{equation}
    \mathbf{h}_p = \mathbf{W}_{\mathrm{in}}\,\tilde{\mathbf{a}}_{t,p}
    + \mathbf{W}_{\mathrm{cond}}\,\mathbf{c}_p,
    \label{eq:conditioning}
\end{equation}
yielding fused tokens $\mathbf{H} \in \mathbb{R}^{P \times d_{\mathrm{model}}}$. Here $\mathbf{W}_{\mathrm{in}}$ and $\mathbf{W}_{\mathrm{cond}}$ are learnable projection matrices in $\mathbb{R}^{d_{\mathrm{model}}\times K}$. Additive conditioning ensures token alignment and preserves locality. 
\subsection{Vision Transformer Denoising Architecture}
\label{sec:vit}

The denoising network is a Vision Transformer operating on the token sequence $\mathbf{H} \in \mathbb{R}^{P \times d_{\mathrm{model}}}$. Tokens are augmented with 2D positional embeddings and processed by $L$ transformer blocks with multi-head self-attention, MLP layers, and timestep-conditioned Adaptive LayerNorm~\citep{peebles2023dit}. A linear head predicts noise $\hat{\epsilon} \in \mathbb{R}^{P \times K}$.
\subsection{Reconstruction and Uncertainty Quantification}
\label{sec:reconstruction}
At inference, latent tokens are sampled using DDIM~\citep{song2020denoising} with $S=100$ steps and de-normalised to recover POD coefficients,
\[
    \hat{\mathbf{a}}_p = \Lambda^{1/2}\hat{\tilde{\mathbf{a}}}_p.
\]
Each patch is then reconstructed by the linear decoder
\begin{equation}
    \hat{\mathbf{u}}_p = \bar{\mathbf{u}} + \Phi\hat{\mathbf{a}}_p.
    \label{eq:decoder}
\end{equation}
The full field is assembled using a fixed linear stitching operator $\mathcal{S}$. We generate $M$ independent latent samples to estimate coefficient-level covariance, and then propagate this covariance through the fixed POD decoder to obtain pixel-space predictive variance. Thus, sampling is performed in the low-dimensional POD space, while spatial uncertainty is obtained through the known decoder geometry.
\begin{figure}
  \centering
  \includegraphics[width=\columnwidth]{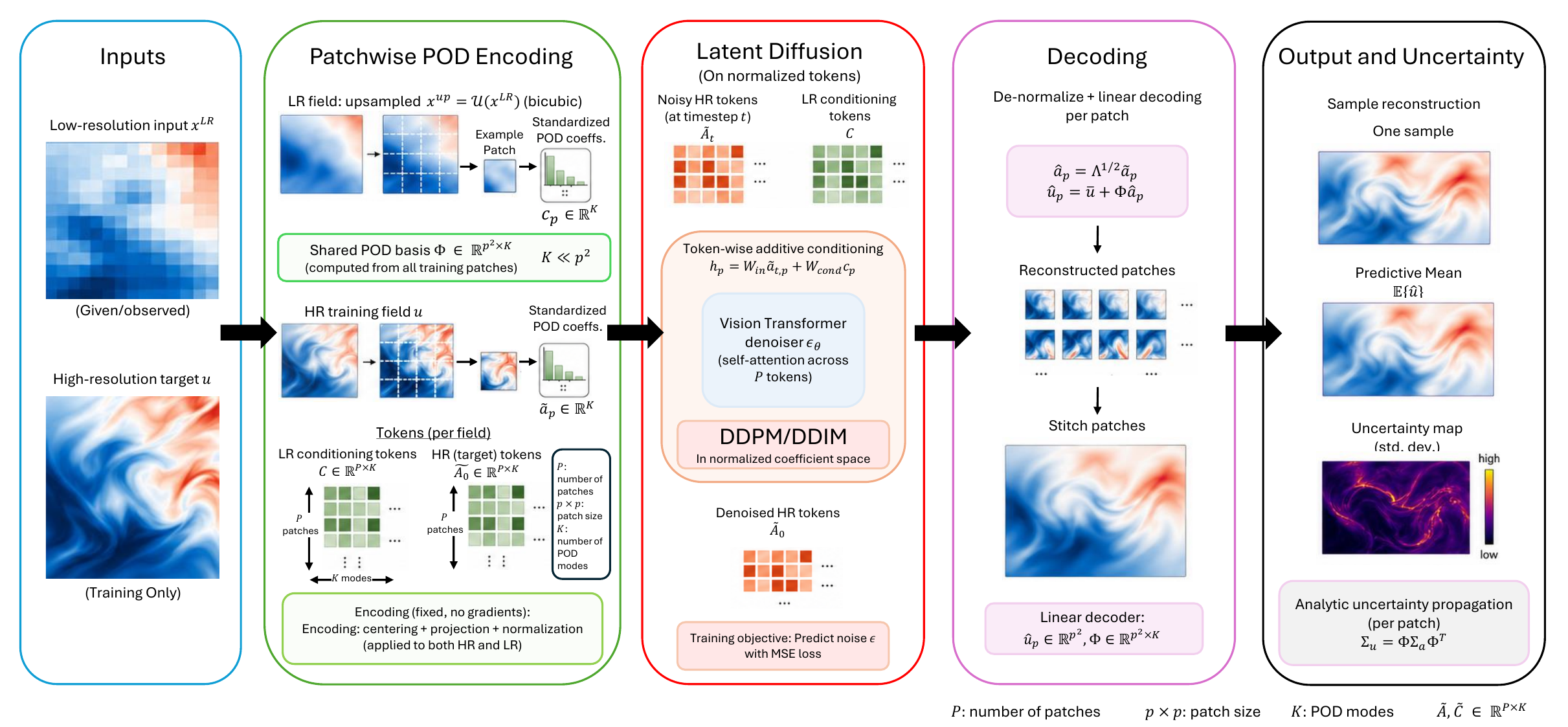}
  \caption{Overview of the proposed Patch-PODiff-ViT pipeline for conditional super-resolution in a structured latent space.}
  \label{fig:pipeline}
\end{figure}
\subsection{Theoretical Guarantees}
\label{sec:theory}
The linear POD encoder-decoder yields two key results.

\subsubsection{Proposition 1: Reconstruction Error Bound}
\begin{proposition}[Patchwise POD Reconstruction Error Bound]
\label{prop:patch_error}
Let random field $\mathbf{u} \in \mathbb{R}^d$ be partitioned into $P$ non-overlapping patches $\mathbf{u}_p \in \mathbb{R}^s$. Let $\bar{\mathbf{u}}$ be the global patch mean and $\Phi \in \mathbb{R}^{s \times K}$ the POD basis retaining energy fraction $\eta$. Define $\hat{\mathbf{u}}_p = \bar{\mathbf{u}} + \Phi \Phi^\top(\mathbf{u}_p - \bar{\mathbf{u}})$ and let $\hat{\mathbf{u}}$ be the reconstructed field. Then
\[
\mathbb{E}\!\left[\|\mathbf{u} - \hat{\mathbf{u}}\|^2\right]
\leq
(1 - \eta) \sum_{p=1}^{P}
\mathbb{E}\!\left[\|\mathbf{u}_p - \bar{\mathbf{u}}\|^2\right].
\]
\end{proposition}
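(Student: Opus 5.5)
Since the patches are non-overlapping, the stitching operator $\mathcal{S}$ simply places each reconstructed patch in its own disjoint block of coordinates. The squared field error therefore splits exactly as
\[
\|\mathbf{u}-\hat{\mathbf{u}}\|^2=\sum_{p=1}^{P}\|\mathbf{u}_p-\hat{\mathbf{u}}_p\|^2 .
\]
Write $\mathbf{v}_p=\mathbf{u}_p-\bar{\mathbf{u}}$. The definition of $\hat{\mathbf{u}}_p$ gives $\mathbf{u}_p-\hat{\mathbf{u}}_p=(I-\Phi\Phi^\top)\mathbf{v}_p$. This is the orthogonal projection of $\mathbf{v}_p$ onto the complement of $\mathrm{span}(\Phi)$, so by Pythagoras
\[
\|\mathbf{u}_p-\hat{\mathbf{u}}_p\|^2=\|\mathbf{v}_p\|^2-\|\Phi^\top\mathbf{v}_p\|^2 .
\]
The proposition then reduces to the pooled inequality
\[
\sum_p\mathbb{E}\|(I-\Phi\Phi^\top)\mathbf{v}_p\|^2\le(1-\eta)\sum_p\mathbb{E}\|\mathbf{v}_p\|^2 .
\]

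To prove it, I introduce the pooled second-moment matrix $R=\sum_{p}\mathbb{E}[\mathbf{v}_p\mathbf{v}_p^\top]\in\mathbb{R}^{s\times s}$. Both sides are linear traces of $R$:
\[
\sum_p\mathbb{E}\|\mathbf{v}_p\|^2=\mathrm{tr}(R),\qquad \sum_p\mathbb{E}\|(I-\Phi\Phi^\top)\mathbf{v}_p\|^2=\mathrm{tr}\big((I-\Phi\Phi^\top)R\big).
\]
I then identify the POD construction with the spectral decomposition of $R$. The centered patch matrix pools all patches from all fields and is centered at the global patch mean, so its Gram matrix divided by $N$ is the empirical version of $R$. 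Its left singular vectors are therefore the eigenvectors of $R$, with eigenvalues $\lambda_k=\sigma_k^2/N$. Since $\Phi$ consists of the top $K$ eigenvectors,
\[
\mathrm{tr}\big((I-\Phi\Phi^\top)R\big)=\sum_{k>K}\lambda_k .
\]
The energy criterion \eqref{eq:energy} reads $\sum_{k\le K}\lambda_k\ge\eta\sum_k\lambda_k$. Hence
\[
\sum_{k>K}\lambda_k\le(1-\eta)\,\mathrm{tr}(R),
\]
which is the claim. Taken as an equality, this is the classical Eckart–Young/Ky Fan optimality of POD, but only the identity and the energy inequality are needed here.

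The main obstacle is interpretive rather than computational. The basis $\Phi$ and mean $\bar{\mathbf{u}}$ are computed from training samples, while the expectation is taken over the random field. I would state explicitly that the expectation is with respect to the same distribution from which $\Phi$ is built: either the empirical distribution of the training fields, where the result is exact, or the population distribution with $\Phi$ taken as the eigenbasis of the population $R$. Under that reading, the remaining computation is routine.

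A secondary point is that the expectation is pooled over patch positions $p$, not taken per patch. The bound therefore holds only for the sum over patches and need not hold for each patch separately. This is exactly why the statement sums over $p$ on the right-hand side.
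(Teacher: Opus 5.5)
Your proof is correct and follows essentially the same route as the paper's: the non-overlapping split into per-patch residuals $(I-\Phi\Phi^\top)(\mathbf{u}_p-\bar{\mathbf{u}})$, the pooled POD identity equating the summed expected residual to the tail eigenvalue sum, and then the energy criterion. Your unnormalized trace formulation via $R$ is the paper's $1/P$-normalized pooled covariance up to scaling, and your explicit caveat that the expectation must be taken under the same distribution used to build $\Phi$ and $\bar{\mathbf{u}}$ makes precise an assumption the paper builds in only implicitly, by defining $\Phi$ as the eigenbasis of the pooled patch covariance.
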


\noindent The result follows from the Eckart--Young theorem; see Appendix~\ref{app:proof1}. We use $\eta = 0.99$ and validate this bound in Section~\ref{sec:bound_validation}.

\subsubsection{Proposition 2: Analytic Uncertainty Propagation}
\begin{proposition}[Analytic Uncertainty Propagation under Patchwise POD]
\label{prop:patch_uq}
Let $\hat{\mathbf{u}}_p = \bar{\mathbf{u}} + \Phi\mathbf{a}_p$ with $\Sigma_{a_p} = \mathrm{Cov}(\mathbf{a}_p)$. Then:
\begin{enumerate}
\item[\textbf{(i)}] $\Sigma_{u_p} = \Phi\,\Sigma_{a_p}\,\Phi^\top$,
\item[\textbf{(ii)}] for non-overlapping patches, assuming $\mathrm{Cov}(\mathbf{a}_p, \mathbf{a}_q)=0$ for $p \neq q$, $\Sigma_u = \mathrm{blkdiag}(\Phi\Sigma_{a_1}\Phi^\top,\dots,\Phi\Sigma_{a_P}\Phi^\top)$,
\item[\textbf{(iii)}] for overlapping patches, $\Sigma_u = \mathcal{S}\tilde{\Sigma}\mathcal{S}^\top$, where $\tilde{\Sigma}$ stacks the patch covariances. 
\end{enumerate}
\end{proposition}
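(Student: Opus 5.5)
All three parts follow from one fact: covariance transforms bilinearly under affine maps. Concretely, if $\mathbf{y} = \mathbf{c} + A\mathbf{x}$ with $\mathbf{c}$ and $A$ deterministic, then $\mathrm{Cov}(\mathbf{y}) = A\,\mathrm{Cov}(\mathbf{x})\,A^\top$. More generally, $\mathrm{Cov}(\mathbf{c}+A\mathbf{x}, \mathbf{c}'+B\mathbf{z}) = A\,\mathrm{Cov}(\mathbf{x},\mathbf{z})\,B^\top$. I will establish this once by expanding $\mathbb{E}[(\mathbf{y}-\mathbb{E}\mathbf{y})(\mathbf{y}'-\mathbb{E}\mathbf{y}')^\top]$; the constant cancels on centering and linearity of expectation pulls out $A$ and $B$. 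I then specialize it to the decoder and stitching operators. Throughout, $\bar{\mathbf{u}}$ and $\Phi$ are fixed after training, since the POD encoder is parameter-free, so all randomness sits in the sampled coefficients $\mathbf{a}_p$.

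For \textbf{(i)}, apply the lemma to the decoder in \eqref{eq:decoder} with $\mathbf{c} = \bar{\mathbf{u}}$ and $A = \Phi$. This immediately gives $\Sigma_{u_p} = \Phi\Sigma_{a_p}\Phi^\top$. Orthonormality of $\Phi$ is not needed for the identity itself. It does show that $\Sigma_{u_p}$ has rank at most $K$ and that $\mathrm{tr}\,\Sigma_{u_p} = \mathrm{tr}\,\Sigma_{a_p}$, and I will record this as a remark.

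For \textbf{(ii)}, with non-overlapping patches the full field is, up to a fixed permutation of pixel indices, the concatenation $\hat{\mathbf{u}} = (\hat{\mathbf{u}}_1,\dots,\hat{\mathbf{u}}_P)$. Its covariance is therefore the block matrix with $(p,q)$ block $\mathrm{Cov}(\hat{\mathbf{u}}_p,\hat{\mathbf{u}}_q)$. The cross-covariance version of the lemma gives $\mathrm{Cov}(\hat{\mathbf{u}}_p,\hat{\mathbf{u}}_q) = \Phi\,\mathrm{Cov}(\mathbf{a}_p,\mathbf{a}_q)\,\Phi^\top$. This vanishes for $p\neq q$ by assumption and reduces to (i) on the diagonal, which yields the block-diagonal form.

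For \textbf{(iii)}, I stack the patch reconstructions into $\tilde{\mathbf{u}} = (\hat{\mathbf{u}}_1,\dots,\hat{\mathbf{u}}_P) \in \mathbb{R}^{Ps}$ and write the assembled field as $\hat{\mathbf{u}} = \mathcal{S}\tilde{\mathbf{u}}$. Here $\mathcal{S}$ is the fixed linear stitching (overlap-averaging) operator. Applying the lemma once more gives $\Sigma_u = \mathcal{S}\tilde{\Sigma}\mathcal{S}^\top$, where $\tilde{\Sigma} = \mathrm{Cov}(\tilde{\mathbf{u}})$ has blocks $\Phi\,\mathrm{Cov}(\mathbf{a}_p,\mathbf{a}_q)\,\Phi^\top$. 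This equals $(I_P\otimes\Phi)\,\Sigma_{\mathbf{A}}\,(I_P\otimes\Phi)^\top$, with $\Sigma_{\mathbf{A}}$ the joint coefficient covariance. Under the block-diagonal approximation used in the paper, $\tilde{\Sigma}$ is exactly the stacked blkdiag of the per-patch covariances from (i).

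There is no analytic obstacle. The main care is interpretive: I need to state precisely that ``stacks the patch covariances'' means the joint covariance of the stacked patch vector. That covariance is block-diagonal only under the same independence assumption as in (ii). I also need to note that the identity is exact whenever the cross terms are retained. Finally, I should be careful with the de-normalization step. If the $\mathbf{a}_p$ are produced as $\Lambda^{1/2}\hat{\tilde{\mathbf{a}}}_p$, then $\Sigma_{a_p} = \Lambda^{1/2}\Sigma_{\tilde a_p}\Lambda^{1/2}$, which is one more application of the same lemma.
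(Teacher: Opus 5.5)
Your proposal is correct and follows the same route as the paper's proof: the affine covariance identity $\mathrm{Cov}(\mathbf{c}+B\mathbf{x}) = B\,\mathrm{Cov}(\mathbf{x})\,B^\top$ is applied to the decoder, to the concatenation of patches, and to the stitching operator $\mathcal{S}$, with cross-patch covariances set to zero by assumption. Your reading of (iii), that $\tilde{\Sigma}$ is block-diagonal only under the same uncorrelatedness hypothesis as in (ii), matches the appendix version of the statement, where that hypothesis is stated explicitly.
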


\noindent The result follows from linearity; see Appendix~\ref{app:proof2}. In practice, we estimate $\hat{\Sigma}_{a_p}$ from $M$ generated latent samples and obtain pixel-level predictive variance as
$\mathrm{diag}(\Phi\hat{\Sigma}_{a_p}\Phi^\top)$ at cost $\mathcal{O}(PMK^2 + PsK)$,
without explicitly forming or storing a full $HW \times HW$ pixel-space covariance matrix.

\subsection{Computational Efficiency}
\label{sec:efficiency}
Patch-PODiff-ViT is efficient due to (i) low-dimensional tokens ($K \ll s$), (ii) resolution-independent tokenisation, and (iii) parameter-free POD encoding. See Section~\ref{sec:efficiency_results} for quantitative comparisons.
\subsection{Limitations}
\label{sec:limitations}
Patch-PODiff-ViT is most effective when spatial fields exhibit low-rank local structure. For highly turbulent or discontinuous fields with slow singular value decay, more POD modes are required, increasing latent dimensionality $P \times K$ and narrowing the efficiency gap relative to pixel-space methods. We observe this in Appendix~\ref{app:advection}, where at $\mathrm{Pe} = O(10^6)$, up to $K=150$ modes are needed to retain $99\%$ variance, compared to $K=2$--$26$ in our main datasets. The POD basis is fixed after construction, so distribution shifts may require recomputation, with adaptive or online extensions left for future work. Proposition~\ref{prop:patch_uq} uses a block-diagonal covariance approximation that neglects cross-patch terms, trading modeling fidelity for tractability. This affects only the covariance map, since the ViT still captures cross-patch dependencies during sampling, and overlapping aggregation via $\mathcal{S}$ can partially mitigate boundary effects. Discarded-mode uncertainty is not modeled explicitly, but remains small under $\eta=0.99$ by Proposition~\ref{prop:patch_error}.
\section{Experimental Setup}
\label{sec:experiments}
\subsection{Datasets}
\label{sec:datasets}
We evaluate on three domains: geophysical fields, medical images, and natural images. 
For SST, we use daily Western Australian coast fields at $640\times480$ from ROMS~\citep{shchepetkin2005regional}, conditioned on ACCESS-S2 inputs at $53\times31$~\citep{wedd2022access} bicubically interpolated to the target grid. We train on 1998--2009 ($\approx$4{,}000 fields), validate on 2010, and test on 2011, which includes the documented West Australian marine heatwave. Land pixels are masked and metrics are computed over ocean pixels only. 
For medical imaging, we use NIH ChestX-ray14~\citep{wang2017chestxray}, resizing frontal radiographs to $256\times256$ and following the official train/test split. 
For natural images, we use FFHQ~\citep{karras2019style} at $256\times256$ with an 80/20 random split using seed 0. 
For X-ray and FFHQ, low-resolution inputs are produced by $4\times$ bicubic downsampling to $64\times64$ followed by upsampling to $256\times256$ before encoding.
\subsection{Baselines and Implementation Details}
\label{sec:baselines}
We compare against five baselines covering deterministic, latent diffusion, pixel-space diffusion, transformer diffusion, and POD-based variants.

\textbf{U-Net:} A deterministic convolutional U-Net~\citep{ronneberger2015u} with a symmetric encoder--decoder, four resolution levels, base width $C=128$, channel widths $(C,2C,4C,8C)$, skip connections, and an $\ell_2$ loss. The same architecture is used across all datasets.

\textbf{VAE-LDM:} A latent diffusion model following Rombach et al.~\citep{rombach2022high}, with a convolutional VAE encoder of base width $C=128$ that compresses $256\times256$ inputs to $64\times64\times4$ latent maps.
The denoiser uses the same ViT configuration as Patch-PODiff-ViT, providing a controlled learned-latent baseline, although the latent dimensionalities are not identical. The VAE is trained on the same training split and selected using validation reconstruction quality before training the latent diffusion model.

\textbf{DiT:} A Diffusion Transformer~\citep{peebles2023dit} operating directly on $16 \times 16$ pixel patches without POD compression. It uses the same ViT configuration as Patch-PODiff-ViT, namely $d_{\mathrm{model}}=512$, $L=12$, and $H=8$, controlling for denoiser capacity; both methods use the same patch sequence length, while Patch-PODiff-ViT compresses each token to $K$ POD modes.

\textbf{PixelDiff:} A pixel-space diffusion model using the same U-Net backbone as the deterministic baseline, trained with a DDPM objective and conditioned on the bicubically upsampled LR input via channel concatenation, following the residual diffusion paradigm of CorrDiff~\cite{mardani2025residual}.

\textbf{Fullfield-PODiff:} A non-patchwise POD baseline using a single full-field POD basis, with $K$ selected by the same $\eta=0.99$ criterion, isolating the contribution of patchwise structure.
%
%

All diffusion models use a cosine noise schedule with $T=1{,}000$ training timesteps, $S=100$ inference steps, and $M=100$ ensemble samples for uncertainty. Models are trained with AdamW at learning rate $10^{-4}$ using identical preprocessing and hardware. Hyperparameters are selected on validation splits and fixed for test evaluation. Standard deviations are reported in Appendix~\ref{app:extended_results}.

\textbf{Patch-PODiff-ViT Implementation Details:} The POD basis is computed from training patches at $\eta = 0.99$. The ViT denoiser uses $d_{\mathrm{model}} = 512$, $L = 12$, $H = 8$ heads, and MLP ratio $4.0$ (${\approx}70$M parameters), and is trained for 1M steps with batch size 64, AdamW (lr$=10^{-4}$, weight decay $10^{-4}$), gradient clipping 1.0, and EMA 0.9995. We use a cosine schedule ($T=1{,}000$) and DDIM sampling  ($S=100$) at inference. Uncertainty estimates use $M=100$ ensemble members unless stated otherwise. The default patch size and resulting $K$ values are reported alongside the main results. Patch size sensitivity is in  Appendix~\ref{app:Other_ablations}. All models trained on AMD Instinct MI250X GPUs.

%
\subsection{Evaluation Metrics}
\label{sec:metrics}
We evaluate reconstruction using RMSE, PSNR, SSIM~\citep{wang2004image}, LPIPS~\citep{zhang2018unreasonable}, and FID~\citep{heusel2017gans}. Uncertainty is assessed with CRPS~\citep{gneiting2007strictly}, MACE, and empirical coverage at nominal levels $\{50,60,70,80,90,95,99\}\%$ using $M=100$ samples, with SST UQ computed over all 365 test days in 2011. Efficiency is measured by parameter count, peak GPU memory, training time, per-sample inference time, and $M=100$ ensemble inference time. All models use identical preprocessing: SST is min--max normalized over training ocean pixels with land masked out, while Chest X-ray and FFHQ are scaled to $[0,1]$, with LPIPS and FID using the required pretrained-network input ranges.
%
\section{Results}
\label{sec:results}
We evaluate Patch-PODiff-ViT using reconstruction accuracy, theoretical bound validation, uncertainty quantification, computational efficiency, and downstream SST ensemble forecasting. The experiments test three claims: patchwise POD provides a compact but spatially structured latent space; the fixed linear decoder enables accurate propagation of latent uncertainty to pixel-space variance; and the efficiency advantage is largest when local patch spectra decay rapidly. Unless stated otherwise, all results use $16 \times 16$ patches, $\eta = 0.99$ energy threshold, and $M=100$ ensemble members. Ensemble size $M$ sensitivity is examined in Appendix~\ref{app:ensemble_size}.
\subsection{Reconstruction Accuracy}
\label{sec:reconstruction_accuracy}
\begin{figure}[t]
  \centering
  \includegraphics[width=\columnwidth]{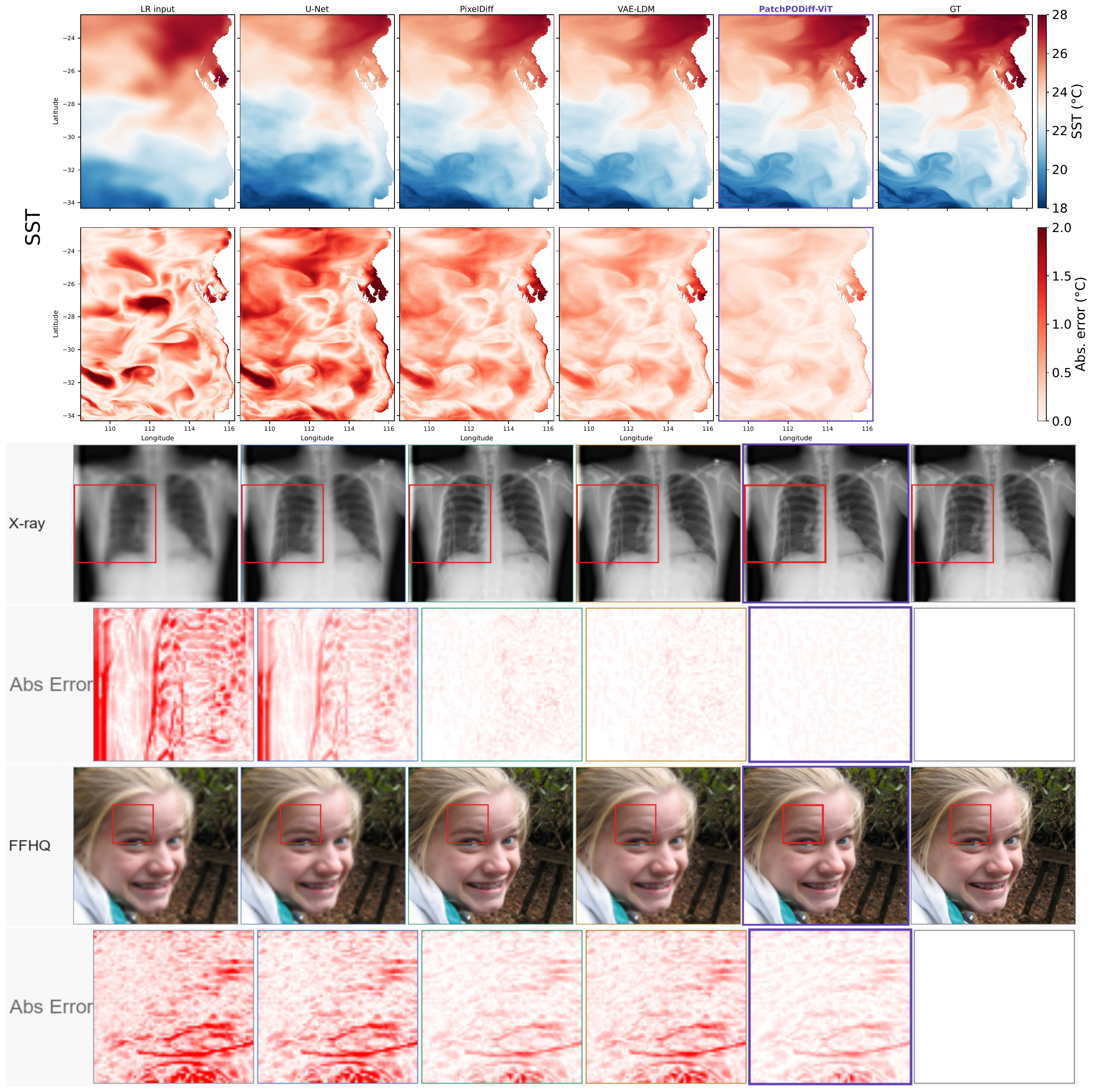}
  \caption{Qualitative comparison across three domains. Columns: LR input, U-Net, PixelDiff, VAE-LDM, Patch-PODiff-ViT (ours), GT. SST (rows 1--2): full reconstruction and absolute error. Chest X-ray (rows 3--4): full reconstruction and zoomed absolute error. FFHQ (rows 5--6): full reconstruction and zoomed absolute error. Patch-PODiff-ViT consistently produces lower error across all domains. Full comparisons with zoomed regions are in Appendix~\ref{app:qualitative}.}
  \label{fig:qualitative_comparison}
\end{figure}
\begin{table*}[t]
\centering
\caption{Reconstruction quality across three datasets. Lower is better for RMSE, LPIPS, FID; higher for PSNR, SSIM. Standard deviations are provided in Appendix~\ref{app:extended_results} in Table \ref{tab:model_comparison_full}. Results are averaged over 3 seed runs.}
\label{tab:model_comparison}
\scriptsize
\setlength{\tabcolsep}{3pt}
\begin{tabular}{l c c c c c c}
\toprule
Metric & \textbf{Patch-PODiff-ViT} & VAE-LDM & DiT & PixelDiff & U-Net & Fullfield-PODiff \\
\midrule
\multicolumn{7}{c}{\textit{SST (Sea Surface Temperature)}} \\
\midrule
RMSE $\downarrow$  & $\mathbf{0.0030}$ & $0.0041$ & $0.0040$ & $0.0049$ & $0.0093$ & $0.0049$ \\
PSNR $\uparrow$    & $\mathbf{50.43}$  & $47.73$  & $47.95$  & $46.23$  & $40.63$  & $46.22$  \\
SSIM $\uparrow$    & $\mathbf{0.9888}$ & $0.9799$ & $0.9707$ & $0.9713$ & $0.9589$ & $0.9690$ \\
LPIPS $\downarrow$ & $\mathbf{0.0131}$ & $0.0201$ & $0.0169$ & $0.0234$ & $0.0295$ & $0.0267$ \\
FID $\downarrow$   & $\mathbf{3.986}$  & $5.90$   & $5.01$   & $8.46$   & $11.96$  & $8.49$   \\
\midrule
\multicolumn{7}{c}{\textit{Chest X-ray}} \\
\midrule
RMSE $\downarrow$  & $\mathbf{0.0065}$ & $0.0087$ & $0.0081$ & $0.0098$ & $0.0133$ & $0.0092$ \\
PSNR $\uparrow$    & $\mathbf{42.98}$  & $39.11$  & $41.21$  & $37.64$  & $34.52$  & $35.75$  \\
SSIM $\uparrow$    & $\mathbf{0.9885}$ & $0.9790$ & $0.9632$ & $0.9743$ & $0.9423$ & $0.9699$ \\
LPIPS $\downarrow$ & $\mathbf{0.0201}$ & $0.0298$ & $0.0281$ & $0.0310$ & $0.0422$ & $0.0307$ \\
FID $\downarrow$   & $\mathbf{6.0152}$ & $8.17$   & $7.01$   & $11.44$  & $13.65$  & $9.26$   \\
\midrule
\multicolumn{7}{c}{\textit{FFHQ (Face Images)}} \\
\midrule
RMSE $\downarrow$  & $\mathbf{0.0109}$ & $0.0178$ & $0.0121$ & $0.0147$ & $0.0361$ & $0.0154$ \\
PSNR $\uparrow$    & $\mathbf{39.15}$  & $36.44$  & $38.04$  & $37.87$  & $32.54$  & $37.15$  \\
SSIM $\uparrow$    & $\mathbf{0.9522}$ & $0.9321$ & $0.9432$ & $0.9411$ & $0.9255$ & $0.9358$ \\
LPIPS $\downarrow$ & $\mathbf{0.0300}$ & $0.0561$ & $0.0377$ & $0.0427$ & $0.0584$ & $0.0471$ \\
FID $\downarrow$   & $\mathbf{9.168}$  & $11.98$  & $10.04$  & $10.11$  & $16.18$  & $12.08$  \\
\bottomrule
\end{tabular}
\end{table*}

Table~\ref{tab:model_comparison} shows that Patch-PODiff-ViT achieves consistently strong reconstruction performance across all datasets, with the best result on most reported metrics.
Among diffusion baselines, DiT is the strongest competitor, suggesting that the gains arise not only from the transformer denoiser but also from the structured POD latent space. Appendix \ref{app:compressionLevel} confirms that variance-ordered compression, not linear structure alone, drives these gains. 
Additionally, since VAE-LDM uses the same denoiser configuration, these gains suggest that the structured POD latent representation is advantageous in these settings, though the comparison does not fully equalize latent dimensionality.
Figure~\ref{fig:qualitative_comparison} shows qualitative results across all three domains. Patch-PODiff-ViT consistently preserves large-scale structure and fine-scale detail, including fronts in SST, anatomical boundaries in Chest X-ray, and texture in FFHQ, while competing methods exhibit smoothing or local distortions near sharp gradients. This trend is reflected quantitatively in Table~\ref{tab:model_comparison} and further supported by extended comparisons in Appendix~\ref{app:qualitative}.

Fullfield-PODiff further ablates patchwise structure: despite using a POD latent space, its FID is consistently worse than Patch-PODiff-ViT. This suggests that local patchwise POD, rather than POD compression alone, drives the improvement. 
The U-Net underperforms all diffusion baselines across perceptual metrics, consistent with the known limitations of deterministic regression for high-frequency detail recovery.
\subsection{Validation of the POD Reconstruction Bound}
\label{sec:bound_validation}
We empirically validate the retained-energy reconstruction bound in Appendix~\ref{app:pod_analysis}. 
Across all datasets, mean empirical POD errors remain below the expected bound; the above-97\% sample-wise satisfaction rates are only empirical diagnostics. This confirms that the retained-energy criterion in Eq. (\ref{eq:energy}) provides reliable control of patchwise reconstruction error in the settings considered.
\subsection{Uncertainty Quantification}
\label{sec:results_uq}
Uncertainty quantification is a central component of the proposed method. We evaluate it along three axes: (i) validity of analytic propagation (Proposition~\ref{prop:patch_uq}), (ii) calibration of predictive intervals, and (iii) sharpness of the predictive distribution.

\textbf{Analytic vs. empirical uncertainty:}
\begin{figure}[htbp]
  \centering
  \includegraphics[width=0.8\columnwidth]{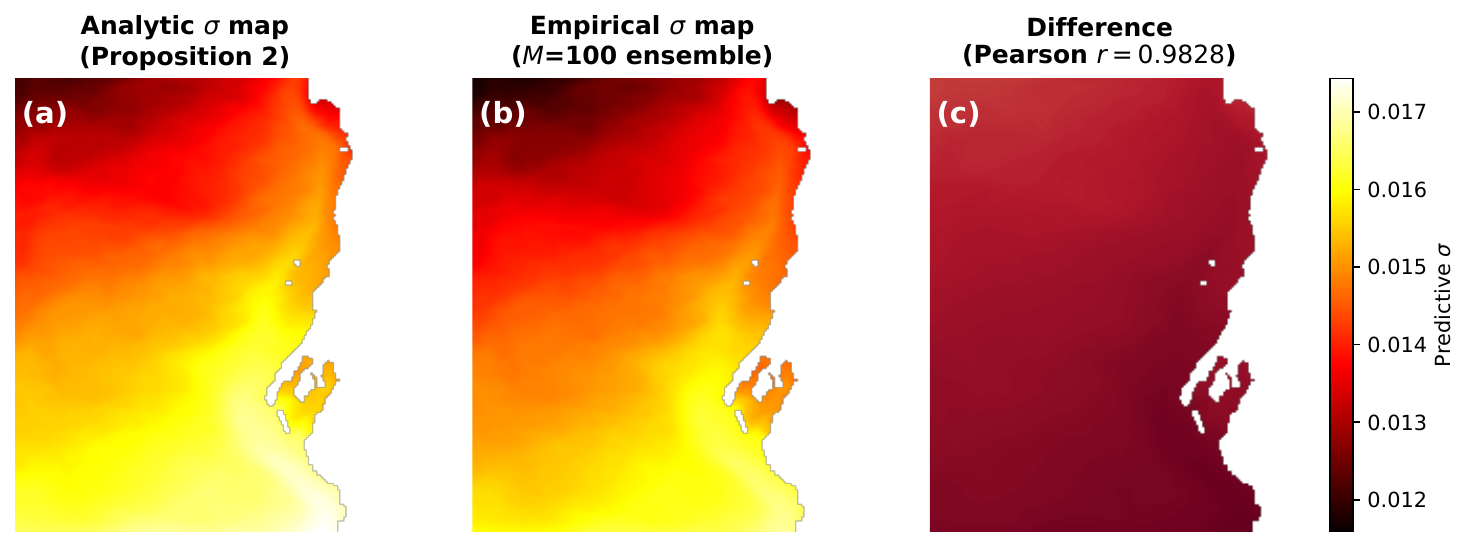}
  \caption{Analytic versus empirical uncertainty on SST. Left: analytic standard deviation computed via Proposition~\ref{prop:patch_uq}. Middle: empirical standard deviation from an ensemble of $M=100$ samples. Right: difference map, with Pearson correlation $r=0.983$. The close agreement indicates that the analytic formulation captures both the spatial structure and magnitude of uncertainty without explicitly forming full pixel-space covariance estimates.}
  \label{fig:uq_maps_sst}
\end{figure}
Figure~\ref{fig:uq_maps_sst} evaluates the covariance propagation step in Proposition~\ref{prop:patch_uq} by comparing pixel-wise standard deviation maps obtained from $\mathrm{diag}(\Phi\hat{\Sigma}_{a_p}\Phi^\top)$ with empirical standard deviation maps computed from $M=100$ decoded samples. The two agree closely on SST ($r=0.9828$), with similar correlations on Chest X-ray ($r=0.986$) and FFHQ ($r=0.953$). Additional maps are provided in Appendix~\ref{app:uq_maps}. Including covariance estimation, analytic propagation costs
$\mathcal{O}(PMK^2 + PsK)$ and avoids explicitly forming full pixel-space covariance estimates. Appendix~\ref{app:cross_patch_covariance} further supports the block-diagonal covariance approximation: off-diagonal cross-patch covariance is small for SST and Chest X-ray and moderate for FFHQ, consistent with the mild patch-boundary effects observed on natural images. Spatially, uncertainty concentrates near SST coastlines and thermal fronts, X-ray lung boundaries and rib edges, and FFHQ features such as eyes and hair.

\textbf{Calibration and Sharpness:}
Figure~\ref{fig:reliability} shows that Patch-PODiff-ViT closely follows the ideal calibration line across datasets and nominal levels. It achieves the lowest MACE, with $0.008$ on SST, $0.0046$ on Chest X-ray, and $0.0084$ on FFHQ, while DiT and VAE-LDM show larger deviations. For sharpness, Patch-PODiff-ViT obtains the lowest CRPS on SST and Chest X-ray. On FFHQ, VAE-LDM attains lower CRPS but higher MACE, indicating a sharper but less calibrated predictive distribution. Full coverage and CRPS values are provided in Appendix~\ref{app:coverage}.
\begin{figure}[t]
  \centering
  \includegraphics[width=0.85\columnwidth]{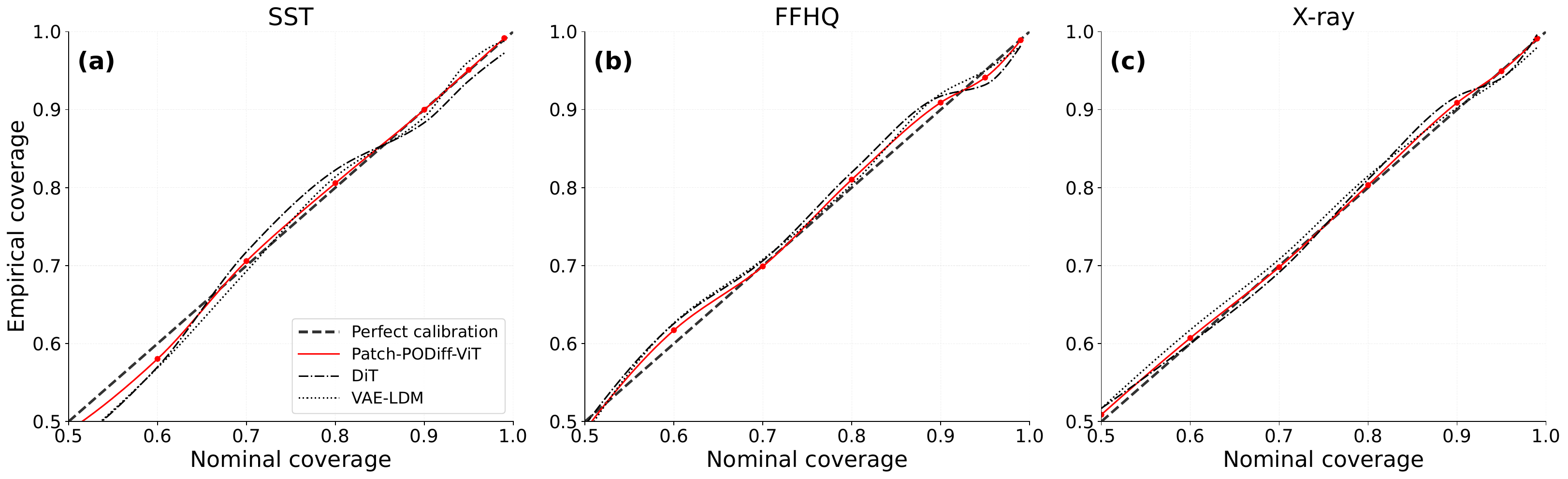}
  \caption{Reliability diagrams comparing empirical versus nominal coverage for SST, FFHQ, and Chest X-ray. The dashed line denotes perfect calibration. Patch-PODiff-ViT remains close to the ideal line across datasets, indicating well-calibrated predictive intervals.}
  \label{fig:reliability}
\end{figure}
Together, these results show that Patch-PODiff-ViT produces uncertainty estimates that are both accurate and well calibrated. 
The POD-based analytic propagation enables efficient uncertainty estimation, while maintaining competitive calibration and sharpness across diverse domains.
\subsection{Downstream Application: Spatial Uncertainty for Ensemble Forecasting}
\label{sec:downstream_sst}
\begin{figure}[htbp]
  \centering
  \includegraphics[width=0.85\columnwidth]{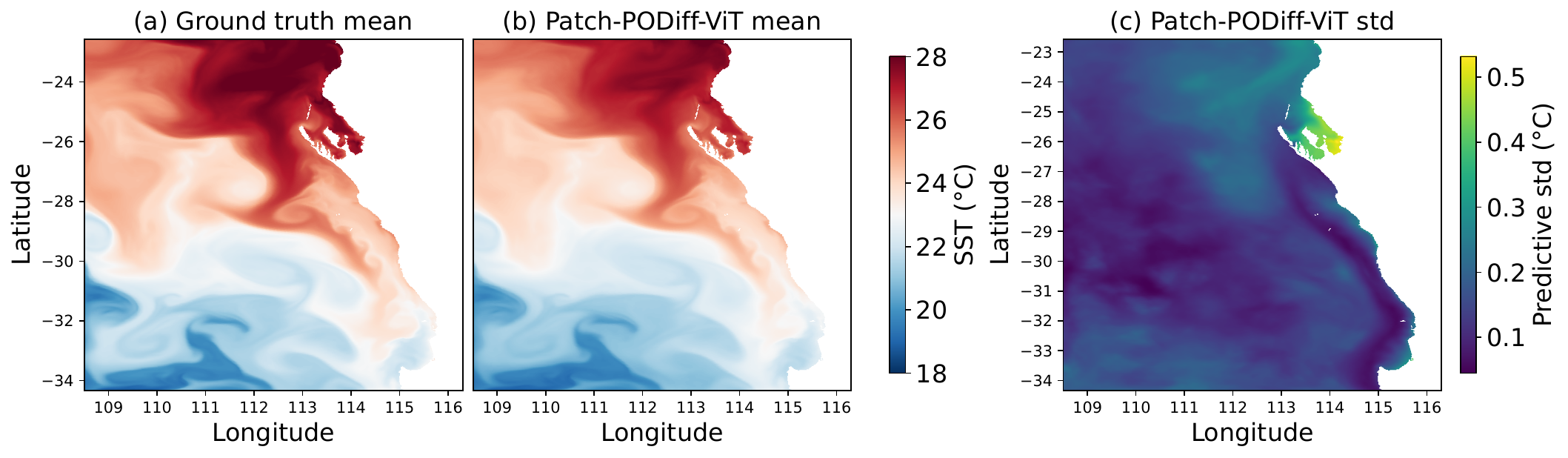}
  \caption{SST ensemble downstream application. (a) Ground truth SST field, (b) Patch-PODiff-ViT ensemble mean, and (c) predictive standard deviation in physical units ($^\circ$C). The uncertainty map highlights regions of higher predictive uncertainty along coastal areas and strong thermal gradients.}
  \label{fig:sst_downstream}
\end{figure}
Figure~\ref{fig:sst_downstream} illustrates Patch-PODiff-ViT on a representative day from the 2011 West Australian marine heatwave. The ensemble mean recovers the large-scale warm water mass, sharp meridional gradient, and coastal fine-scale structure, while the predictive standard deviation highlights higher uncertainty near coastlines and strong thermal gradients. From an application perspective, the model provides both a high-resolution estimate and a spatial confidence map, helping identify regions that require greater caution.
\subsection{Computational Efficiency}
\label{sec:efficiency_results}
\begin{table}[htbp]
\centering
\caption{Computational cost comparison across diffusion models.}
\label{tab:efficiency}
\scriptsize
\setlength{\tabcolsep}{4pt}
\begin{tabular}{lccccc}
\toprule
Model & Params (M) & Train (hrs) & GPU Mem (GB) & Inference (s) & $M{=}100$ (s) \\
\midrule
Patch-PODiff-ViT & 70  & 8.7  & 8.6  & 0.109 & 11.036 \\
VAE-LDM          & 220 & 30.6 & 16.1 & 0.255 & 26.038 \\
DiT              & 75  & 10.2 & 9.2  & 0.112 & 11.855 \\
PixelDiff        & 160 & 26.8 & 12.3 & 1.800 & 189.530 \\
\bottomrule
\end{tabular}
\end{table}
Table~\ref{tab:efficiency} compares computational cost. Patch-PODiff-ViT uses 70M parameters and trains in 8.7 h, compared with 220M and 30.6 h for VAE-LDM. It also reduces peak memory from 16.1 GB to 8.6 GB and generates an $M=100$ ensemble in 11.036 s, compared with 26.038 s for VAE-LDM and 189.530 s for PixelDiff. DiT has similar sampling time due to the matched transformer backbone, but Patch-PODiff-ViT achieves better reconstruction and calibration through POD-compressed tokens. 

Additional ablations on patch size and denoiser architecture are provided in Appendix~\ref{app:Other_ablations}, showing that $16\times 16$ performs best among the tested patch sizes and that ViT self-attention is critical, as replacing the ViT with a per-token MLP substantially degrades performance (FID 3.986 to 12.59).

\bibliographystyle{unsrt}
\bibliography{references}

@article{ho2020denoising,
  title={Denoising diffusion probabilistic models},
  author={Ho, Jonathan and Jain, Ajay and Abbeel, Pieter},
  journal={Advances in neural information processing systems},
  volume={33},
  pages={6840--6851},
  year={2020}
}

@inproceedings{
song2020score,
title={Score-Based Generative Modeling through Stochastic Differential Equations},
author={Yang Song and Jascha Sohl-Dickstein and Diederik P Kingma and Abhishek Kumar and Stefano Ermon and Ben Poole},
booktitle={International Conference on Learning Representations},
year={2021},
url={https://openreview.net/forum?id=PxTIG12RRHS}
}

@inproceedings{nichol2021improved,
  title={Improved denoising diffusion probabilistic models},
  author={Nichol, Alexander Quinn and Dhariwal, Prafulla},
  booktitle={International conference on machine learning},
  pages={8162--8171},
  year={2021},
  organization={PMLR}
}

@inproceedings{
song2020denoising,
title={Denoising Diffusion Implicit Models},
author={Jiaming Song and Chenlin Meng and Stefano Ermon},
booktitle={International Conference on Learning Representations},
year={2021},
url={https://openreview.net/forum?id=St1giarCHLP}
}

@article{vahdat2021score,
  title={Score-based generative modeling in latent space},
  author={Vahdat, Arash and Kreis, Karsten and Kautz, Jan},
  journal={Advances in neural information processing systems},
  volume={34},
  pages={11287--11302},
  year={2021}
}

@article{price2023gencast,
  title={Gencast: Diffusion-based ensemble forecasting for medium-range weather},
  author={Price, Ilan and Sanchez-Gonzalez, Alvaro and Alet, Ferran and Andersson, Tom R and El-Kadi, Andrew and Masters, Dominic and Ewalds, Timo and Stott, Jacklynn and Mohamed, Shakir and Battaglia, Peter and others},
  journal={arXiv preprint arXiv:2312.15796},
  year={2023}
}

@article{leinonen2020stochastic,
  title={Stochastic super-resolution for downscaling time-evolving atmospheric fields with a generative adversarial network},
  author={Leinonen, Jussi and Nerini, Daniele and Berne, Alexis},
  journal={IEEE Transactions on Geoscience and Remote Sensing},
  volume={59},
  number={9},
  pages={7211--7223},
  year={2020},
  publisher={IEEE}
}

@article{stengel2020adversarial,
  title={Adversarial super-resolution of climatological wind and solar data},
  author={Stengel, Karen and Glaws, Andrew and Hettinger, Dylan and King, Ryan N},
  journal={Proceedings of the National Academy of Sciences},
  volume={117},
  number={29},
  pages={16805--16815},
  year={2020},
  publisher={National Academy of Sciences}
}

@article{benner2015survey,
  title={A survey of projection-based model reduction methods for parametric dynamical systems},
  author={Benner, Peter and Gugercin, Serkan and Willcox, Karen},
  journal={{SIAM} Review},
  volume={57},
  number={4},
  pages={483--531},
  year={2015},
  publisher={SIAM}
}

@article{gneiting2007strictly,
  title={Strictly proper scoring rules, prediction, and estimation},
  author={Gneiting, Tilmann and Raftery, Adrian E},
  journal={Journal of the American Statistical Association},
  volume={102},
  number={477},
  pages={359--378},
  year={2007},
  publisher={Taylor \& Francis}
}

@article{watt2024generative,
  title={Generative diffusion-based downscaling for climate},
  author={Watt, Robbie A and Mansfield, Laura A},
  journal={arXiv preprint arXiv:2404.17752},
  year={2024}
}

@article{coscia2024generative,
  title={Generative adversarial reduced order modelling},
  author={Coscia, Dario and Demo, Nicola and Rozza, Gianluigi},
  journal={Scientific Reports},
  volume={14},
  number={1},
  pages={3826},
  year={2024},
  publisher={Nature Publishing Group UK London}
}

@inproceedings{rombach2022high,
  title={High-resolution image synthesis with latent diffusion models},
  author={Rombach, Robin and Blattmann, Andreas and Lorenz, Dominik and Esser, Patrick and Ommer, Bj{\"o}rn},
  booktitle={Proceedings of the IEEE/CVF conference on computer vision and pattern recognition},
  pages={10684--10695},
  year={2022}
}

@article{saharia2022image,
  title={Image super-resolution via iterative refinement},
  author={Saharia, Chitwan and Ho, Jonathan and Chan, William and Salimans, Tim and Fleet, David J and Norouzi, Mohammad},
  journal={IEEE transactions on pattern analysis and machine intelligence},
  volume={45},
  number={4},
  pages={4713--4726},
  year={2022},
  publisher={IEEE}
}

@article{moser2024diffusion,
  title={Diffusion models, image super-resolution, and everything: A survey},
  author={Moser, Brian B and Shanbhag, Arundhati S and Raue, Federico and Frolov, Stanislav and Palacio, Sebastian and Dengel, Andreas},
  journal={IEEE Transactions on Neural Networks and Learning Systems},
  year={2024},
  publisher={IEEE}
}

@article{berkooz1993proper,
  title={The proper orthogonal decomposition in the analysis of turbulent flows},
  author={Berkooz, Gal and Holmes, Philip and Lumley, John L},
  journal={Annual review of fluid mechanics},
  volume={25},
  number={1},
  pages={539--575},
  year={1993},
  publisher={Annual Reviews 4139 El Camino Way, PO Box 10139, Palo Alto, CA 94303-0139, USA}
}

@article{sirovich1987turbulence,
  title={Turbulence and the dynamics of coherent structures. {I.} Coherent structures},
  author={Sirovich, Lawrence},
  journal={Quarterly of applied mathematics},
  volume={45},
  number={3},
  pages={561--571},
  year={1987}
}

@inproceedings{ronneberger2015u,
  title={U-net: Convolutional networks for biomedical image segmentation},
  author={Ronneberger, Olaf and Fischer, Philipp and Brox, Thomas},
  booktitle={International Conference on Medical image computing and computer-assisted intervention},
  pages={234--241},
  year={2015},
  organization={Springer}
}

@article{leinonen2023latent,
  title={Latent diffusion models for generative precipitation nowcasting with accurate uncertainty quantification},
  author={Leinonen, Jussi and Hamann, Ulrich and Nerini, Daniele and Germann, Urs and Franch, Gabriele},
  journal={arXiv preprint arXiv:2304.12891},
  year={2023}
}

@article{shchepetkin2005regional,
  title={The regional oceanic modeling system {(ROMS)}: a split-explicit, free-surface, topography-following-coordinate oceanic model},
  author={A.F. Shchepetkin and J.C. McWilliams},
  journal={Ocean Modelling},
  volume={9},
  number={4},
  pages={347--404},
  year={2005},
  publisher={Elsevier}
}

@article{wedd2022access,
  title={{ACCESS-S2:} the upgraded Bureau of Meteorology multi-week to seasonal prediction system},
  author={R. Wedd and O. Alves and C. {de Burgh-Day} and C. Down and M. Griffiths and H.H. Hendon and D. Hudson and S. Li and E.P. Lim and A.G. Marshall and others},
  journal={Journal of Southern Hemisphere Earth Systems Science},
  volume={72},
  number={3},
  pages={218--242},
  year={2022},
  publisher={CSIRO Publishing}
}

@article{du2024conditional,
  title={Conditional neural field latent diffusion model for generating spatiotemporal turbulence},
  author={Du, Pan and Parikh, Meet Hemant and Fan, Xiantao and Liu, Xin-Yang and Wang, Jian-Xun},
  journal={Nature Communications},
  volume={15},
  number={1},
  pages={10416},
  year={2024},
  publisher={Nature Publishing Group UK London}
}

@inproceedings{peebles2023dit,
  title={Scalable Diffusion Models with Transformers},
  author={Peebles, William and Xie, Saining},
  booktitle={Proceedings of the IEEE/CVF International Conference on Computer Vision},
  pages={4195--4205},
  year={2023}
}

@inproceedings{wang2017chestxray,
  title={{ChestX-ray8}: Hospital-scale Chest X-ray Database and Benchmarks on Weakly-Supervised Classification and Localization of Common Thorax Diseases},
  author={Wang, Xiaosong and Peng, Yifan and Lu, Le and Lu, Zhiyong and Bagheri, Mohammadhadi and Summers, Ronald M.},
  booktitle={Proceedings of the IEEE Conference on Computer Vision and Pattern Recognition},
  pages={2097--2106},
  year={2017}
}

@article{wang2004image,
  title={Image Quality Assessment: From Error Visibility to Structural Similarity},
  author={Wang, Zhou and Bovik, Alan C. and Sheikh, Hamid R. and Simoncelli, Eero P.},
  journal={IEEE Transactions on Image Processing},
  volume={13},
  number={4},
  pages={600--612},
  year={2004}
}

@inproceedings{zhang2018unreasonable,
  title={The Unreasonable Effectiveness of Deep Features as a Perceptual Metric},
  author={Zhang, Richard and Isola, Phillip and Efros, Alexei A. and Shechtman, Eli and Wang, Oliver},
  booktitle={Proceedings of the IEEE Conference on Computer Vision and Pattern Recognition},
  pages={586--595},
  year={2018}
}

@inproceedings{heusel2017gans,
  title={{GANs} Trained by a Two Time-Scale Update Rule Converge to a Local {Nash} Equilibrium},
  author={Heusel, Martin and Ramsauer, Hubert and Unterthiner, Thomas and Nessler, Bernhard and Hochreiter, Sepp},
  booktitle={Advances in Neural Information Processing Systems},
  volume={30},
  year={2017}
}

@inproceedings{karras2019style,
  title={A Style-Based Generator Architecture for Generative Adversarial Networks},
  author={Karras, Tero and Laine, Samuli and Aila, Timo},
  booktitle={Proceedings of the IEEE/CVF Conference on Computer Vision and Pattern Recognition},
  pages={4401--4410},
  year={2019}
}

@article{bohm2019uncertainty,
  title={Uncertainty quantification with generative models},
  author={B{\"o}hm, Vanessa and Lanusse, Fran{\c{c}}ois and Seljak, Uro{\v{s}}},
  journal={arXiv preprint arXiv:1910.10046},
  year={2019}
}

@article{mardani2025residual,
  title={Residual corrective diffusion modeling for km-scale atmospheric downscaling},
  author={Mardani, Morteza and Brenowitz, Noah and Cohen, Yair and Pathak, Jaideep and Chen, Chieh-Yu and Liu, Cheng-Chin and Vahdat, Arash and Nabian, Mohammad Amin and Ge, Tao and Subramaniam, Akshay and others},
  journal={Communications Earth \& Environment},
  volume={6},
  number={1},
  pages={124},
  year={2025},
  publisher={Nature Publishing Group UK London}
}

@article{jadhav2026podiff,
  title={{PODiff:} Latent Diffusion in Proper Orthogonal Decomposition Space for Scientific Super-Resolution},
  author={Jadhav, Onkar and French, Tim and Rayson, Matthew and Jones, Nicole L},
  journal={arXiv preprint arXiv:2605.03399},
  year={2026}
}

@article{Jadhav2025RCNN,
author = {Onkar Jadhav  and Tim French  and Ivica Janekovic  and Nicole L Jones  and Matthew David Rayson },
title = {Deep Learning-Based Statistical Downscaling of Sea Surface Temperature Using a Residual Corrective Neural Network},
journal = {ESS Open Archive},
volume = {2025},
number = {0813},
pages = {},
year = {2025},
doi = {10.22541/essoar.175510661.11731470/v1},
URL = {https://essopenarchive.org/doi/abs/10.22541/essoar.175510661.11731470/v1},
eprint = {https://essopenarchive.org/doi/pdf/10.22541/essoar.175510661.11731470/v1}
}








\appendix

\section{Proposition 1 Proof}
\label{app:proof1}
\setcounter{proposition}{0}

\begin{proposition}[Patchwise POD Reconstruction Error Bound]
Let random field $\mathbf{u} \in \mathbb{R}^d$ be partitioned into $P$ non-overlapping
patches $\{\mathbf{u}_p\}_{p=1}^{P}$, where each patch
$\mathbf{u}_p \in \mathbb{R}^s$ and $d = Ps$.

Let $\bar{\mathbf{u}} \in \mathbb{R}^s$ be the global patch mean and
$\Phi \in \mathbb{R}^{s \times K}$ the shared POD basis formed by the top-$K$
eigenvectors of the pooled patch covariance, with corresponding eigenvalues
\[
\lambda_1 \ge \lambda_2 \ge \cdots \ge \lambda_s \ge 0.
\]
Assume the retained modes satisfy
\[
\frac{\sum_{k=1}^{K}\lambda_k}{\sum_{k=1}^{s}\lambda_k} \ge \eta,
\qquad \eta \in (0,1).
\]

Define the patchwise POD reconstruction
\[
\hat{\mathbf{u}}_p
=
\bar{\mathbf{u}}
+
\Phi\Phi^\top
(\mathbf{u}_p - \bar{\mathbf{u}}),
\]
and let $\hat{\mathbf{u}}$ be the global reconstruction obtained by
reassembling the reconstructed patches. Then
\[
\mathbb{E}\!\left[\|\mathbf{u}-\hat{\mathbf{u}}\|^2\right]
\le
(1-\eta)\sum_{p=1}^{P}
\mathbb{E}\!\left[\|\mathbf{u}_p-\bar{\mathbf{u}}\|^2\right].
\]
\end{proposition}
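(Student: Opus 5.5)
The approach is to reduce everything to a trace identity for the pooled second-moment matrix of the centered patches. The step to fix first is the meaning of ``pooled patch covariance''. I will take it to be
\[
C \;=\; \frac{1}{P}\sum_{p=1}^{P}\mathbb{E}\!\left[(\mathbf{u}_p-\bar{\mathbf{u}})(\mathbf{u}_p-\bar{\mathbf{u}})^\top\right],
\qquad
\bar{\mathbf{u}} \;=\; \frac{1}{P}\sum_{p=1}^{P}\mathbb{E}[\mathbf{u}_p].
\]
Here the expectation is taken over the same distribution used to build the basis; in practice this is the empirical training distribution, for which $C$ is exactly $\frac{1}{NP}$ times the Gram matrix of the centered patch matrix. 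With this reading, the $\lambda_k$ are the eigenvalues of $C$ and $\Phi$ holds its top-$K$ eigenvectors. This matches the SVD construction in Section~\ref{sec:pod}, with $\lambda_k \propto \sigma_k^2$.

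\textbf{Step 1: decouple the patches.} Because the patches are non-overlapping and partition the $d=Ps$ coordinates, the reassembly map is a coordinate permutation. Hence
\[
\|\mathbf{u}-\hat{\mathbf{u}}\|^2=\sum_p\|\mathbf{u}_p-\hat{\mathbf{u}}_p\|^2 .
\]

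\textbf{Step 2: write each patch error as a projection residual.} By definition of $\hat{\mathbf{u}}_p$,
\[
\mathbf{u}_p-\hat{\mathbf{u}}_p=(I-\Phi\Phi^\top)(\mathbf{u}_p-\bar{\mathbf{u}}).
\]
The matrix $Q=I-\Phi\Phi^\top$ is a symmetric idempotent orthogonal projector, so
\[
\|\mathbf{u}_p-\hat{\mathbf{u}}_p\|^2=\mathrm{tr}\!\left(Q(\mathbf{u}_p-\bar{\mathbf{u}})(\mathbf{u}_p-\bar{\mathbf{u}})^\top\right).
\]

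\textbf{Step 3: sum over patches.} Taking expectations, summing over $p$, and using linearity of trace and expectation gives
\[
\mathbb{E}\|\mathbf{u}-\hat{\mathbf{u}}\|^2 = P\,\mathrm{tr}(QC).
\]

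\textbf{Step 4: evaluate the trace.} Since $\Phi$ consists of the top-$K$ eigenvectors of $C$, diagonalizing $C$ gives
\[
\mathrm{tr}(QC)=\sum_{k>K}\lambda_k .
\]
This is the Eckart--Young/Ky Fan optimality statement: no rank-$K$ projector achieves a smaller residual. For the inequality itself, however, only the exact computation of the trace is needed.

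\textbf{Step 5: apply the energy criterion.} The retained-energy assumption gives
\[
\sum_{k>K}\lambda_k\le(1-\eta)\sum_{k=1}^s\lambda_k=(1-\eta)\,\mathrm{tr}(C).
\]
Finally,
\[
P\,\mathrm{tr}(C)=\sum_p\mathbb{E}\|\mathbf{u}_p-\bar{\mathbf{u}}\|^2 .
\]
Chaining these relations yields the claimed bound, and in fact it holds with equality to the tail energy.

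The main obstacle is not the algebra but the consistency of the setup. The bound is exact only if $\Phi$, $\bar{\mathbf{u}}$ and the $\lambda_k$ are computed from the same distribution over which $\mathbb{E}$ is taken. Under a population-versus-sample mismatch, Step~4 must be replaced by $\mathrm{tr}(QC)$ for the true $C$ with the empirical $\Phi$, and the inequality can fail. I would therefore state this assumption explicitly, namely that $\mathbb{E}$ is the pooled empirical or training expectation. I would then note that centering at the global rather than per-position mean is harmless, because $C$ is defined as a second moment about $\bar{\mathbf{u}}$, not as a covariance.
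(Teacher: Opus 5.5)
Your proof is correct and takes the same route as the paper. Both arguments decouple the non-overlapping patches, write each patch error as $(\mathbf{I}-\Phi\Phi^\top)(\mathbf{u}_p-\bar{\mathbf{u}})$, use the pooled POD identity that the patch-averaged residual equals $\sum_{k>K}\lambda_k$, and close with the retained-energy criterion and $\sum_{k}\lambda_k = \frac{1}{P}\sum_p \mathbb{E}\|\mathbf{u}_p-\bar{\mathbf{u}}\|^2$. Your trace computation, and your explicit requirement that $\Phi$, $\bar{\mathbf{u}}$ and the $\lambda_k$ come from the same pooled distribution as $\mathbb{E}$, make precise what the paper states in one line when it says the projection identity ``applies to the pooled patch distribution.''
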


\begin{proof}
Because the patches are non-overlapping,
\[
\|\mathbf{u}-\hat{\mathbf{u}}\|^2
=
\sum_{p=1}^{P}\|\mathbf{u}_p-\hat{\mathbf{u}}_p\|^2.
\]
Taking expectations gives
\[
\mathbb{E}\!\left[\|\mathbf{u}-\hat{\mathbf{u}}\|^2\right]
=
\sum_{p=1}^{P}
\mathbb{E}\!\left[\|\mathbf{u}_p-\hat{\mathbf{u}}_p\|^2\right].
\]

For each patch $p$, define the centered patch
$\tilde{\mathbf{u}}_p := \mathbf{u}_p - \bar{\mathbf{u}}$. Then
\[
\mathbf{u}_p-\hat{\mathbf{u}}_p
=
(\mathbf{I}-\Phi\Phi^\top)\tilde{\mathbf{u}}_p,
\]
and therefore
\[
\|\mathbf{u}_p-\hat{\mathbf{u}}_p\|^2
=
\|(\mathbf{I}-\Phi\Phi^\top)\tilde{\mathbf{u}}_p\|^2.
\]

Since $\Phi$ is constructed from the pooled patch covariance, the POD
projection identity applies to the pooled patch distribution:
\[
\frac{1}{P}\sum_{p=1}^{P}
\mathbb{E}\!\left[
\|(\mathbf{I}-\Phi\Phi^\top)\tilde{\mathbf{u}}_p\|^2
\right]
=
\sum_{k=K+1}^{s}\lambda_k.
\]
Using the retained-energy assumption,
\[
\sum_{k=K+1}^{s}\lambda_k
=
\sum_{k=1}^{s}\lambda_k
-
\sum_{k=1}^{K}\lambda_k
\le
(1-\eta)\sum_{k=1}^{s}\lambda_k.
\]
The total variance of the pooled centered patches is
\[
\sum_{k=1}^{s}\lambda_k
=
\frac{1}{P}\sum_{p=1}^{P}
\mathbb{E}\!\left[
\|\mathbf{u}_p-\bar{\mathbf{u}}\|^2
\right].
\]
Combining the previous two equations gives
\[
\frac{1}{P}\sum_{p=1}^{P}
\mathbb{E}\!\left[
\|\mathbf{u}_p-\hat{\mathbf{u}}_p\|^2
\right]
\le
(1-\eta)
\frac{1}{P}\sum_{p=1}^{P}
\mathbb{E}\!\left[
\|\mathbf{u}_p-\bar{\mathbf{u}}\|^2
\right].
\]
Multiplying both sides by $P$ yields
\[
\sum_{p=1}^{P}
\mathbb{E}\!\left[
\|\mathbf{u}_p-\hat{\mathbf{u}}_p\|^2
\right]
\le
(1-\eta)
\sum_{p=1}^{P}
\mathbb{E}\!\left[
\|\mathbf{u}_p-\bar{\mathbf{u}}\|^2
\right].
\]
Finally, using the non-overlapping patch decomposition,
\[
\mathbb{E}\!\left[\|\mathbf{u}-\hat{\mathbf{u}}\|^2\right]
\le
(1-\eta)\sum_{p=1}^{P}
\mathbb{E}\!\left[\|\mathbf{u}_p-\bar{\mathbf{u}}\|^2\right].
\]
This completes the proof.
\end{proof}

\begin{remark}
If $\eta = 0.99$, then the expected reconstruction error under the pooled
patch distribution is at most $1\%$ of the total pooled patch variance.
Equivalently, for a non-overlapping reconstruction, the expected global
reconstruction error is bounded by $1\%$ of the sum of patchwise variances.
\end{remark}
\section{Proposition 2 Proof}
\label{app:proof2}
\begin{proposition}[Analytic Uncertainty Propagation under Patchwise POD]
Let an input field be represented by $P$ patches. For each patch $p$,
let the reconstruction be
\[
\hat{\mathbf{u}}_p = \bar{\mathbf{u}} + \Phi \mathbf{a}_p,
\]
where $\bar{\mathbf{u}} \in \mathbb{R}^{s}$ is the global patch mean,
$\Phi \in \mathbb{R}^{s \times K}$ is the shared retained POD basis,
and $\mathbf{a}_p \in \mathbb{R}^{K}$ is a random latent coefficient vector
with covariance $\Sigma_{a_p} := \mathrm{Cov}(\mathbf{a}_p)$.

Then the following hold:

\begin{enumerate}
    \item[\textbf{(i)}] \textbf{Patch-level covariance.}
    The covariance of the reconstructed patch is
    \[
    \Sigma_{u_p}
    :=
    \mathrm{Cov}(\hat{\mathbf{u}}_p)
    =
    \Phi \Sigma_{a_p} \Phi^\top.
    \]

    \item[\textbf{(ii)}] \textbf{Non-overlapping patch assembly.}
    Suppose patches are reassembled without overlap into
    $\hat{\mathbf{u}} = [\hat{\mathbf{u}}_1^\top \cdots \hat{\mathbf{u}}_P^\top]^\top$.
    If the latent variables are mutually uncorrelated across distinct patches,
    i.e.\ $\mathrm{Cov}(\mathbf{a}_p, \mathbf{a}_q) = \mathbf{0}$ for $p \neq q$,
    then the global covariance is block-diagonal:
    \[
    \Sigma_u
    :=
    \mathrm{Cov}(\hat{\mathbf{u}})
    =
    \mathrm{blkdiag}
    \left(
    \Phi \Sigma_{a_1} \Phi^\top,
    \dots,
    \Phi \Sigma_{a_P} \Phi^\top
    \right).
    \]

    \item[\textbf{(iii)}] \textbf{Overlapping patches with linear aggregation.}
    Suppose overlapping reconstructed patches are concatenated into
    $\tilde{\mathbf{u}} = [\hat{\mathbf{u}}_1^\top \cdots \hat{\mathbf{u}}_P^\top]^\top$,
    and the final field is obtained by a fixed linear operator $\mathcal{S}$:
    $\hat{\mathbf{u}} = \mathcal{S}\tilde{\mathbf{u}}$.
    If patch latents are mutually uncorrelated across patches, then
    \[
    \mathrm{Cov}(\tilde{\mathbf{u}})
    =
    \tilde{\Sigma}
    :=
    \mathrm{blkdiag}
    \left(
    \Phi \Sigma_{a_1} \Phi^\top,
    \dots,
    \Phi \Sigma_{a_P} \Phi^\top
    \right),
    \]
    and the global covariance is
    \[
    \Sigma_u = \mathcal{S}\tilde{\Sigma}\mathcal{S}^\top.
    \]
\end{enumerate}
\end{proposition}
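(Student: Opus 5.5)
The whole argument rests on one fact: if $\mathbf{y} = \mathbf{b} + A\mathbf{x}$ with $\mathbf{b}$ deterministic and $A$ a fixed matrix, then $\mathrm{Cov}(\mathbf{y}) = A\,\mathrm{Cov}(\mathbf{x})\,A^\top$. I would state and check this once, by writing $\mathbf{y}-\mathbb{E}[\mathbf{y}] = A(\mathbf{x}-\mathbb{E}[\mathbf{x}])$ and pulling $A$ and $A^\top$ out of the expectation of the outer product. The same argument gives the cross-covariance version, $\mathrm{Cov}(\mathbf{b}_1 + A_1\mathbf{x}_1,\, \mathbf{b}_2 + A_2\mathbf{x}_2) = A_1\,\mathrm{Cov}(\mathbf{x}_1,\mathbf{x}_2)\,A_2^\top$. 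Each part of the proposition is then a specific choice of $A$ and $\mathbf{x}$. I would also note at the outset that $\bar{\mathbf{u}}$, $\Phi$ and $\mathcal{S}$ are fixed after training, so they are deterministic, and that $\mathbf{a}_p$ has finite second moments so all covariances exist.

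For (i), I would take $A=\Phi$, $\mathbf{b}=\bar{\mathbf{u}}$ and $\mathbf{x}=\mathbf{a}_p$. This gives $\Sigma_{u_p}=\Phi\Sigma_{a_p}\Phi^\top$ directly.

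For (ii), I would stack the latents as $\mathbf{a}=[\mathbf{a}_1^\top\cdots\mathbf{a}_P^\top]^\top$. The non-overlapping reassembly can then be written as $\hat{\mathbf{u}} = \mathbf{1}_P\otimes\bar{\mathbf{u}} + (I_P\otimes\Phi)\,\mathbf{a}$, possibly up to a fixed permutation matrix that reorders pixels into patch order. Such a permutation only conjugates the covariance and does not change the block structure in patch ordering, so I would work in patch ordering. The uncorrelatedness hypothesis makes $\mathrm{Cov}(\mathbf{a})=\mathrm{blkdiag}(\Sigma_{a_1},\dots,\Sigma_{a_P})$. The fact applied with $A=I_P\otimes\Phi$ then gives block $(p,q)$ equal to $\Phi\,\mathrm{Cov}(\mathbf{a}_p,\mathbf{a}_q)\,\Phi^\top$. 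This is $\Phi\Sigma_{a_p}\Phi^\top$ on the diagonal and zero off it, which is the claimed block-diagonal form.

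For (iii), the concatenated vector $\tilde{\mathbf{u}}$ has exactly the structure of (ii), so the same computation gives $\mathrm{Cov}(\tilde{\mathbf{u}})=\tilde\Sigma$. Applying the basic fact once more with $A=\mathcal{S}$ gives $\Sigma_u=\mathcal{S}\tilde\Sigma\mathcal{S}^\top$.

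No step is a genuine obstacle; the only care needed is bookkeeping. First, the permutation or stitching between patch ordering and pixel ordering must be made explicit so that "block-diagonal" is unambiguous. Second, the cross-patch uncorrelatedness in (ii) and (iii) is an assumption that (i) does not need. Third, in (iii) $\Sigma_u$ is generally \emph{not} block-diagonal, because $\mathcal{S}$ mixes overlapping patches. I would add a remark that for pixel-wise variance only $\mathrm{diag}(\Phi\Sigma_{a_p}\Phi^\top)$ is needed, together with the stitching weights.
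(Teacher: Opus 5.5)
Your proposal is correct and follows the paper's proof. Both rest on the identity $\mathrm{Cov}(\mathbf{B}\mathbf{x}) = \mathbf{B}\,\mathrm{Cov}(\mathbf{x})\,\mathbf{B}^\top$, applied first to $\Phi$, then to the stacked decoder, then to $\mathcal{S}$, together with the vanishing cross-patch covariances; your $I_P\otimes\Phi$ formulation and cross-covariance form just make the paper's one-line ``concatenation'' step explicit.
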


\begin{proof}
The result follows from linear covariance propagation.
For each patch, $\hat{\mathbf{u}}_p = \bar{\mathbf{u}} + \Phi \mathbf{a}_p$.
Since $\bar{\mathbf{u}}$ is deterministic,
\[
\mathrm{Cov}(\hat{\mathbf{u}}_p)
=
\mathrm{Cov}(\Phi \mathbf{a}_p)
=
\Phi\, \mathrm{Cov}(\mathbf{a}_p)\,\Phi^\top,
\]
proving part (i).

Parts (ii) and (iii) follow by writing the global reconstruction as either
a concatenation (non-overlapping case) or a linear transformation via
$\mathcal{S}$ (overlapping case), and applying the identity
$\mathrm{Cov}(\mathbf{B}\mathbf{x}) = \mathbf{B}\,\mathrm{Cov}(\mathbf{x})\,\mathbf{B}^\top$,
together with the assumption that cross-patch covariances vanish.
\end{proof}

\begin{remark}[Practical significance]
Proposition~\ref{prop:patch_uq} shows that uncertainty propagation through the
patchwise POD decoder is analytically tractable due to its linear structure.
Unlike nonlinear decoders (e.g., VAEs or diffusion models in pixel space),
no Jacobian linearisation or Monte Carlo backpropagation is required.
\end{remark}

\begin{remark}[Empirical covariance estimation]
\label{rem:empirical_cov}
In practice, $\Sigma_{a_p}$ is estimated from $M$ sampled latent
realisations $\{\mathbf{a}_p^{(m)}\}_{m=1}^M$ using the empirical covariance
\[
\hat{\Sigma}_{a_p}
=
\frac{1}{M-1}
\sum_{m=1}^M
(\mathbf{a}_p^{(m)}-\bar{\mathbf{a}}_p)
(\mathbf{a}_p^{(m)}-\bar{\mathbf{a}}_p)^\top.
\]
The plug-in estimator
$\hat{\Sigma}_{u_p} = \Phi \hat{\Sigma}_{a_p}\Phi^\top$
is then used to construct analytic uncertainty maps.
\end{remark}

\begin{corollary}[Pointwise predictive variance]
\label{cor:pointwise_variance}
Under Proposition~\ref{prop:patch_uq}, the predictive variance at within-patch pixel index $\ell$ in patch $p$ is 
$[\Sigma_{u_p}]_{\ell\ell} = [\Phi\Sigma_{a_p}\Phi^\top]_{\ell\ell}$.
In the overlapping case, for global pixel index $\ell$,
$\mathrm{Var}[\hat{u}_\ell] = [\mathcal{S}\tilde{\Sigma}\mathcal{S}^\top]_{\ell\ell}$.
\end{corollary}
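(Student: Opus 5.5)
The corollary is a direct read-off of the diagonal entries of the covariance matrices established in Proposition~\ref{prop:patch_uq}, so the plan is simply to identify pixel variances with diagonal entries of the appropriate covariance and invoke the relevant part of the proposition.

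\textbf{Within-patch case.} First, for a fixed patch $p$ and within-patch index $\ell$, I will note that by definition $\mathrm{Var}[(\hat{\mathbf{u}}_p)_\ell] = \mathbf{e}_\ell^\top \mathrm{Cov}(\hat{\mathbf{u}}_p)\,\mathbf{e}_\ell = [\Sigma_{u_p}]_{\ell\ell}$. Part (i) of Proposition~\ref{prop:patch_uq} then gives $\Sigma_{u_p}=\Phi\Sigma_{a_p}\Phi^\top$. I will also record the explicit form $[\Phi\Sigma_{a_p}\Phi^\top]_{\ell\ell} = \boldsymbol{\phi}_\ell^\top \Sigma_{a_p}\boldsymbol{\phi}_\ell$, where $\boldsymbol{\phi}_\ell^\top$ is the $\ell$-th row of $\Phi$. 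This form justifies the $\mathcal{O}(sK^2)$-per-patch evaluation that avoids forming $\Sigma_{u_p}$.

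\textbf{Non-overlapping assembly.} In the non-overlapping case, each global pixel belongs to exactly one patch. The diagonal of the block-diagonal $\Sigma_u$ in part (ii) therefore coincides with the patchwise diagonals, so the global statement reduces to the within-patch one. This step uses only the diagonal blocks, so it does not actually require the cross-patch uncorrelatedness assumption.

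\textbf{Overlapping case.} For a global pixel $\ell$, I will write $\hat u_\ell = \mathbf{e}_\ell^\top\mathcal{S}\tilde{\mathbf{u}}$. Applying $\mathrm{Cov}(\mathbf{B}\mathbf{x})=\mathbf{B}\mathrm{Cov}(\mathbf{x})\mathbf{B}^\top$ with $\mathbf{B}=\mathbf{e}_\ell^\top\mathcal{S}$ gives $\mathrm{Var}[\hat u_\ell] = \mathbf{e}_\ell^\top\mathcal{S}\,\mathrm{Cov}(\tilde{\mathbf{u}})\,\mathcal{S}^\top\mathbf{e}_\ell$. Under the uncorrelated-latent assumption, part (iii) identifies $\mathrm{Cov}(\tilde{\mathbf{u}})=\tilde{\Sigma}$, which yields $[\mathcal{S}\tilde{\Sigma}\mathcal{S}^\top]_{\ell\ell}$.

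I will optionally expand this for averaging stitching, where $\mathcal{S}$ has entries $w_{\ell,(p,j)}$. Because $\tilde{\Sigma}$ is block-diagonal, the quadratic form splits as $\sum_p \mathbf{w}_{\ell,p}^\top \Phi\Sigma_{a_p}\Phi^\top \mathbf{w}_{\ell,p}$. Here $\mathbf{w}_{\ell,p}\in\mathbb{R}^s$ collects the weights that pixel $\ell$ places on patch $p$'s entries, and these vectors are nonzero only for patches covering $\ell$.

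\textbf{Main obstacle.} There is no real difficulty. The only care point is that the overlapping formula depends on the block-diagonal hypothesis inherited from part (iii). I will state explicitly that the formula holds under that assumption, and that without it the cross-patch blocks $\Phi\,\mathrm{Cov}(\mathbf{a}_p,\mathbf{a}_q)\Phi^\top$ would contribute additional terms.
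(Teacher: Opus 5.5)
Your proposal is correct and follows the same route as the paper. The paper gives no separate proof, treating the corollary as an immediate read-off of diagonal entries of the covariances in Proposition~\ref{prop:patch_uq}(i) and (iii), which is what you do via $\mathbf{e}_\ell^\top(\cdot)\,\mathbf{e}_\ell$ and $\mathrm{Cov}(\mathbf{B}\mathbf{x})=\mathbf{B}\,\mathrm{Cov}(\mathbf{x})\,\mathbf{B}^\top$. Your side observations are also accurate: the non-overlapping diagonal needs no cross-patch assumption, while the overlapping formula genuinely does, a dependence the paper's appendix remark that cross-patch covariance affects only off-diagonal entries leaves implicit.
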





\section{POD Basis Analysis}
\label{app:pod_analysis}
\subsection{Singular Value Decay and Compression}
\label{app:svd_decay}

Figure~\ref{fig:cumulative_energy} shows the cumulative retained energy of the patchwise POD basis across datasets. The shaded bands indicate variation across patches, and the dashed line marks the $99\%$ energy threshold used in the main experiments.

The required number of modes varies noticeably: $K_{99}=2$ for SST, $K_{99}=11$ for Chest X-ray, and $K_{99}=26$ for FFHQ. This reflects increasing local complexity from smooth geophysical fields to anatomical structure and natural image texture. The ordering also explains the observed compression trends: datasets with faster spectral decay allow stronger dimensionality reduction, while more complex datasets require larger latent dimensions to preserve fine-scale structure.





\subsection{POD Mode Statistics}
\label{app:svd_stats}

Table~\ref{tab:svd_stats} reports the mean, median, and maximum number of POD modes needed to retain $95\%$ and $99\%$ variance across patches. The median $K_{99}$ values match those used in the main experiments, showing that the selected latent dimensions are representative. SST and Chest X-ray show small mean--median gaps, indicating uniform spectral decay, while FFHQ shows a mild tail of slower-decaying patches due to higher texture variability. Overall, the statistics support a shared patchwise POD basis whose dimensionality remains stable within each dataset while adapting to domain complexity.
\begin{table}[htbp]
\centering
\caption{POD mode statistics per dataset. Mean, median, and maximum number of modes required to retain $95\%$ and $99\%$ of patchwise variance, computed across all patches within each dataset.}
\label{tab:svd_stats}
\footnotesize
\setlength{\tabcolsep}{5pt}
\begin{tabular}{l ccc ccc}
\toprule
& \multicolumn{3}{c}{$\eta = 0.95$} & \multicolumn{3}{c}{$\eta = 0.99$} \\
\cmidrule(lr){2-4} \cmidrule(lr){5-7}
Dataset & Mean & Median & Max & Mean & Median & Max \\
\midrule
SST        & 1.005 & 1 & 2  & 1.594  & 2  & 3  \\
ChestX-ray & 3.078 & 3 & 4  & 11.551 & 11 & 13 \\
FFHQ       & 5.195 & 5 & 8  & 26.875 & 26 & 32 \\
\bottomrule
\end{tabular}
\end{table}
\begin{figure}[htbp]
  \centering
  \includegraphics[width=0.85\columnwidth]{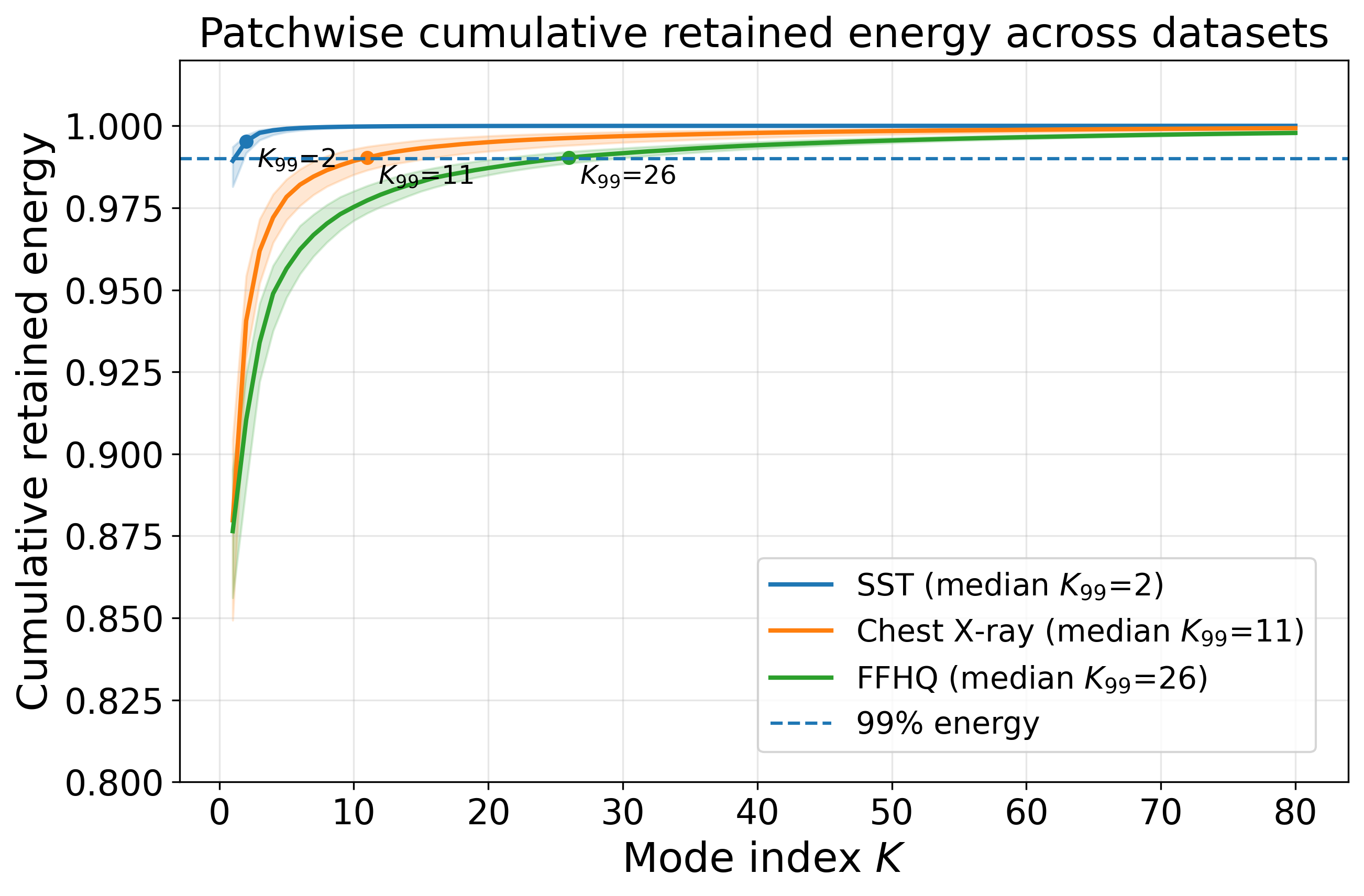}
  \caption{Patchwise cumulative retained energy as a function of mode index $K$ for each dataset. Shaded bands indicate variability across patches. The dashed line marks the $\eta = 0.99$ energy threshold. SST reaches $K_{99}=2$, ChestX-ray $K_{99}=11$, and FFHQ $K_{99}=26$, reflecting differing degrees of local low-rank structure across domains.}
  \label{fig:cumulative_energy}
\end{figure}
%
\subsection{Validation of the POD Reconstruction Bound}
We validate the reconstruction error bound in Proposition~\ref{prop:patch_error} by comparing the empirical POD reconstruction error with the theoretical bound for each dataset. Table~\ref{tab:prop1_validation} shows that the mean empirical POD reconstruction error stays below the expected bound for all datasets, with ratios of 0.94--0.98; sample-wise satisfaction is reported as an empirical diagnostic, not as a deterministic guarantee. This supports the retained-energy criterion in Eq.~\ref{eq:energy} as a reliable control on patchwise reconstruction error.
\begin{table}[b]
\centering
\caption{Empirical validation of Proposition~\ref{prop:patch_error}. The mean reconstruction error remains below the expected theoretical bound across all datasets.}
\label{tab:prop1_validation}
\scriptsize
\setlength{\tabcolsep}{6pt}
\begin{tabular}{lcccc}
\toprule
Dataset & Bound & Actual Error & Ratio & \% Satisfying \\
\midrule
SST        & 2.6075  & 2.4800  & 0.9810 & 99.75 \\
X-ray      & 2.6521 & 2.5980 & 0.9511 & 98.95 \\
FFHQ       & 23.6347 & 23.1856 & 0.9796 & 97.62 \\
\bottomrule
\end{tabular}
\end{table}
%
\section{Extended Reconstruction Results}
\label{app:extended_results}
Table~\ref{tab:model_comparison_full} extends Table~\ref{tab:model_comparison} by reporting mean $\pm$ standard deviation across three seed runs for RMSE, PSNR, SSIM, and LPIPS across all models and datasets. FID is reported once per dataset, since it is a distribution-level metric rather than a per-sample statistic.
\begin{table*}[t]
\centering
\caption{Reconstruction quality across three datasets. Lower values indicate better performance for
RMSE, LPIPS, and FID, while higher values are better for PSNR and SSIM. Results are reported as
mean $\pm$ standard deviation across three independent training runs, where each run first averages
the metric over the test set. FID is computed at the dataset level and reported once.}
\label{tab:model_comparison_full}
\scriptsize
\setlength{\tabcolsep}{3pt}
\begin{tabular}{l c c c c c c}
\toprule
Metric & \textbf{Patch-PODiff-ViT} & VAE-LDM & DiT & PixelDiff & U-Net & Fullfield-PODiff \\
\midrule
\multicolumn{7}{c}{\textit{SST (Sea Surface Temperature)}} \\
\midrule
RMSE $\downarrow$  & $\mathbf{0.0030 \pm 0.0010}$ & $0.0041 \pm 0.0016$ & $0.0040 \pm 0.0018$ & $0.0049 \pm 0.0017$ & $0.0093 \pm 0.0051$ & $0.0049 \pm 0.0018$ \\
PSNR $\uparrow$    & $\mathbf{50.43 \pm 1.09}$    & $47.73 \pm 1.97$    & $47.95 \pm 1.09$    & $46.23 \pm 2.57$    & $40.63 \pm 2.11$    & $46.22 \pm 1.94$    \\
SSIM $\uparrow$    & $\mathbf{0.9888 \pm 0.0043}$ & $0.9799 \pm 0.0100$ & $0.9707 \pm 0.0042$ & $0.9713 \pm 0.0110$ & $0.9589 \pm 0.0255$ & $0.9690 \pm 0.0093$ \\
LPIPS $\downarrow$ & $\mathbf{0.0131 \pm 0.0064}$ & $0.0201 \pm 0.0101$ & $0.0169 \pm 0.0058$ & $0.0234 \pm 0.0098$ & $0.0295 \pm 0.0025$ & $0.0267 \pm 0.0109$ \\
FID $\downarrow$   & $\mathbf{3.986}$             & $5.90$              & $5.01$              & $8.46$              & $11.96$             & $8.49$              \\
\midrule
\multicolumn{7}{c}{\textit{Chest X-ray}} \\
\midrule
RMSE $\downarrow$  & $\mathbf{0.0065 \pm 0.0011}$ & $0.0087 \pm 0.0046$ & $0.0081 \pm 0.0019$ & $0.0098 \pm 0.0099$ & $0.0133 \pm 0.0059$ & $0.0092 \pm 0.0057$ \\
PSNR $\uparrow$    & $\mathbf{42.98 \pm 1.31}$    & $39.11 \pm 1.98$    & $41.21 \pm 1.31$    & $37.64 \pm 2.85$    & $34.52 \pm 2.11$    & $35.75 \pm 2.11$    \\
SSIM $\uparrow$    & $\mathbf{0.9885 \pm 0.0030}$ & $0.9790 \pm 0.0100$ & $0.9632 \pm 0.0042$ & $0.9743 \pm 0.0104$ & $0.9423 \pm 0.0156$ & $0.9699 \pm 0.0084$ \\
LPIPS $\downarrow$ & $\mathbf{0.0201 \pm 0.0079}$ & $0.0298 \pm 0.0086$ & $0.0281 \pm 0.0094$ & $0.0310 \pm 0.0101$ & $0.0422 \pm 0.0106$ & $0.0307 \pm 0.0100$ \\
FID $\downarrow$   & $\mathbf{6.0152}$             & $8.17$              & $7.01$              & $11.44$             & $13.65$             & $9.26$              \\
\midrule
\multicolumn{7}{c}{\textit{FFHQ (Face Images)}} \\
\midrule
RMSE $\downarrow$  & $\mathbf{0.0109 \pm 0.0081}$ & $0.0178 \pm 0.0080$ & $0.0121 \pm 0.0122$ & $0.0147 \pm 0.0210$ & $0.0361 \pm 0.0097$ & $0.0154 \pm 0.0143$ \\
PSNR $\uparrow$    & $\mathbf{39.15 \pm 2.33}$    & $36.44 \pm 2.09$    & $38.04 \pm 2.33$    & $37.87 \pm 1.99$    & $32.54 \pm 2.56$    & $37.15 \pm 2.21$    \\
SSIM $\uparrow$    & $\mathbf{0.9522 \pm 0.0096}$ & $0.9321 \pm 0.0079$ & $0.9432 \pm 0.0042$ & $0.9411 \pm 0.0099$ & $0.9255 \pm 0.0236$ & $0.9358 \pm 0.0500$ \\
LPIPS $\downarrow$ & $\mathbf{0.0300 \pm 0.0056}$ & $0.0561 \pm 0.0056$ & $0.0377 \pm 0.0067$ & $0.0427 \pm 0.0068$ & $0.0584 \pm 0.0552$ & $0.0471 \pm 0.0366$ \\
FID $\downarrow$   & $\mathbf{9.168}$             & $11.98$             & $10.04$             & $10.11$             & $16.18$             & $12.08$             \\
\bottomrule
\end{tabular}
\end{table*}
\section{Qualitative Comparisons}
\label{app:qualitative}
\begin{figure}[htbp]
  \centering
  \includegraphics[width=\columnwidth]{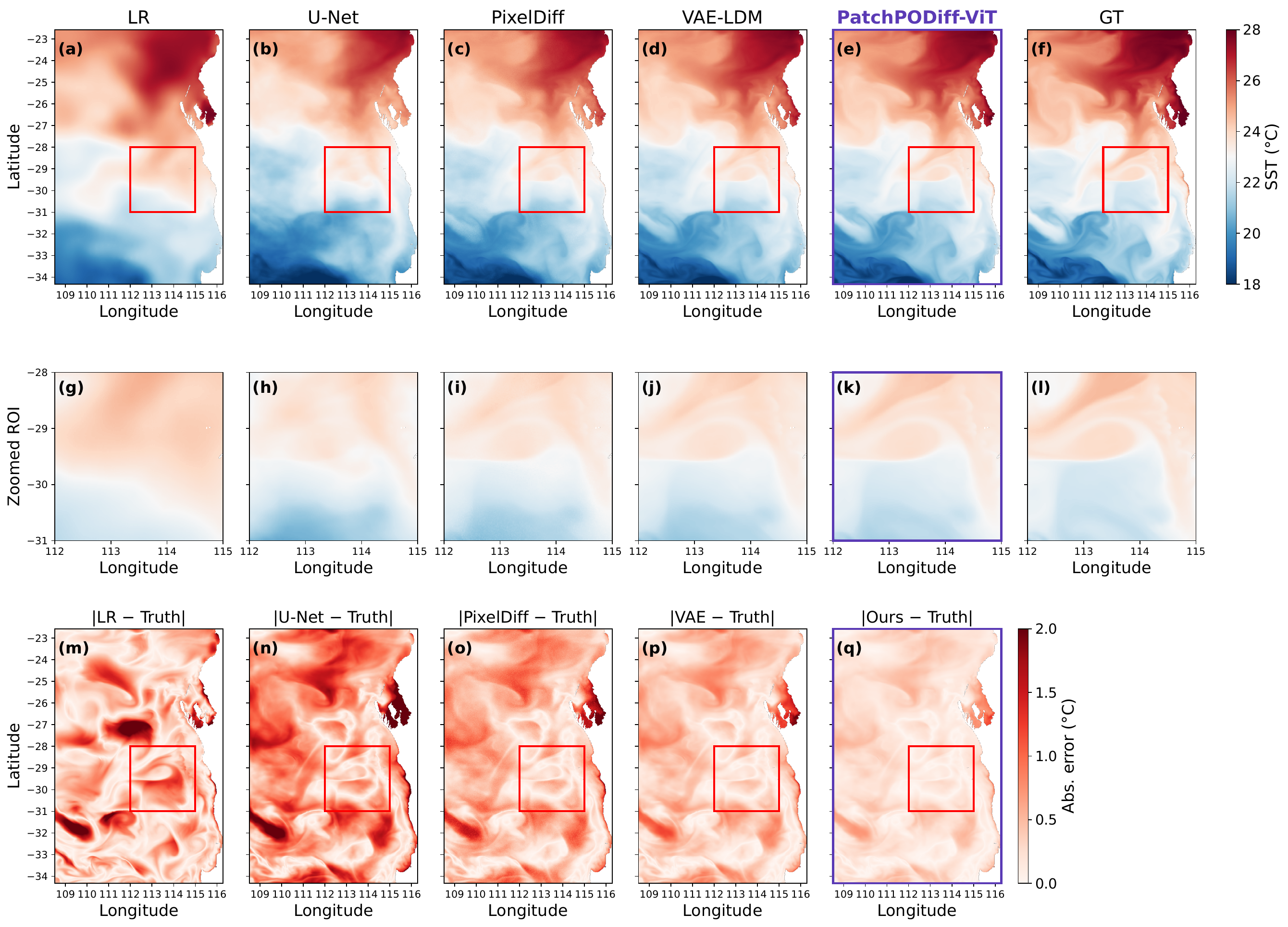}
  \caption{Qualitative comparison on SST. Columns show upscaled LR input, U-Net, PixelDiff, VAE-LDM, Patch-PODiff-ViT, and ground truth (GT). Top row shows full-field reconstructions, middle row shows a zoomed region of interest, and bottom row shows absolute reconstruction error. Patch-PODiff-ViT consistently preserves both large-scale structure and fine-scale variability, particularly in regions with strong spatial gradients.}
  \label{fig:sst_appendix_comparison}
\end{figure}
Figures~\ref{fig:sst_appendix_comparison}, \ref{fig:xray_comparison}, and~\ref{fig:ffhq_comparison} present extended qualitative super-resolution results for SST, Chest X-ray, and FFHQ respectively, complementing the  three-domain overview in Figure~\ref{fig:qualitative_comparison}. The SST figure shows one representative scene; the X-ray and FFHQ figures each show two scenes. All include full reconstructions, zoomed regions of interest, and corresponding error maps, consistent with the quantitative trends reported in Section~\ref{sec:reconstruction_accuracy}.

On SST (Figure~\ref{fig:sst_appendix_comparison}), the low-resolution input and U-Net baseline fail to recover fine-scale thermal fronts, particularly in the zoomed coastal region. PixelDiff and VAE-LDM partially recover large-scale structure but exhibit smoothing near sharp meridional gradients. Patch-PODiff-ViT produces the lowest error, preserving both the large-scale warm water mass and fine-scale frontal structure in the zoomed region.

On Chest X-ray (Figure~\ref{fig:xray_comparison}), the low-resolution input and U-Net baseline exhibit higher reconstruction errors than diffusion-based methods, particularly along rib edges and lung boundaries. PixelDiff and VAE-LDM reduce these errors but introduce noticeable smoothing of anatomical structures. In contrast, Patch-PODiff-ViT produces sharper boundaries and more accurate structural detail, with consistently lower error in both global views and zoomed regions.

On FFHQ (Figure~\ref{fig:ffhq_comparison}), baseline methods tend to smooth or distort high-frequency textures, especially in regions such as hair and eyes. Patch-PODiff-ViT better preserves fine-scale detail, resulting in more faithful reconstructions and reduced error in high-detail regions, as seen in the zoomed panels.
\begin{figure}[htbp]
  \centering
  \includegraphics[width=\columnwidth]{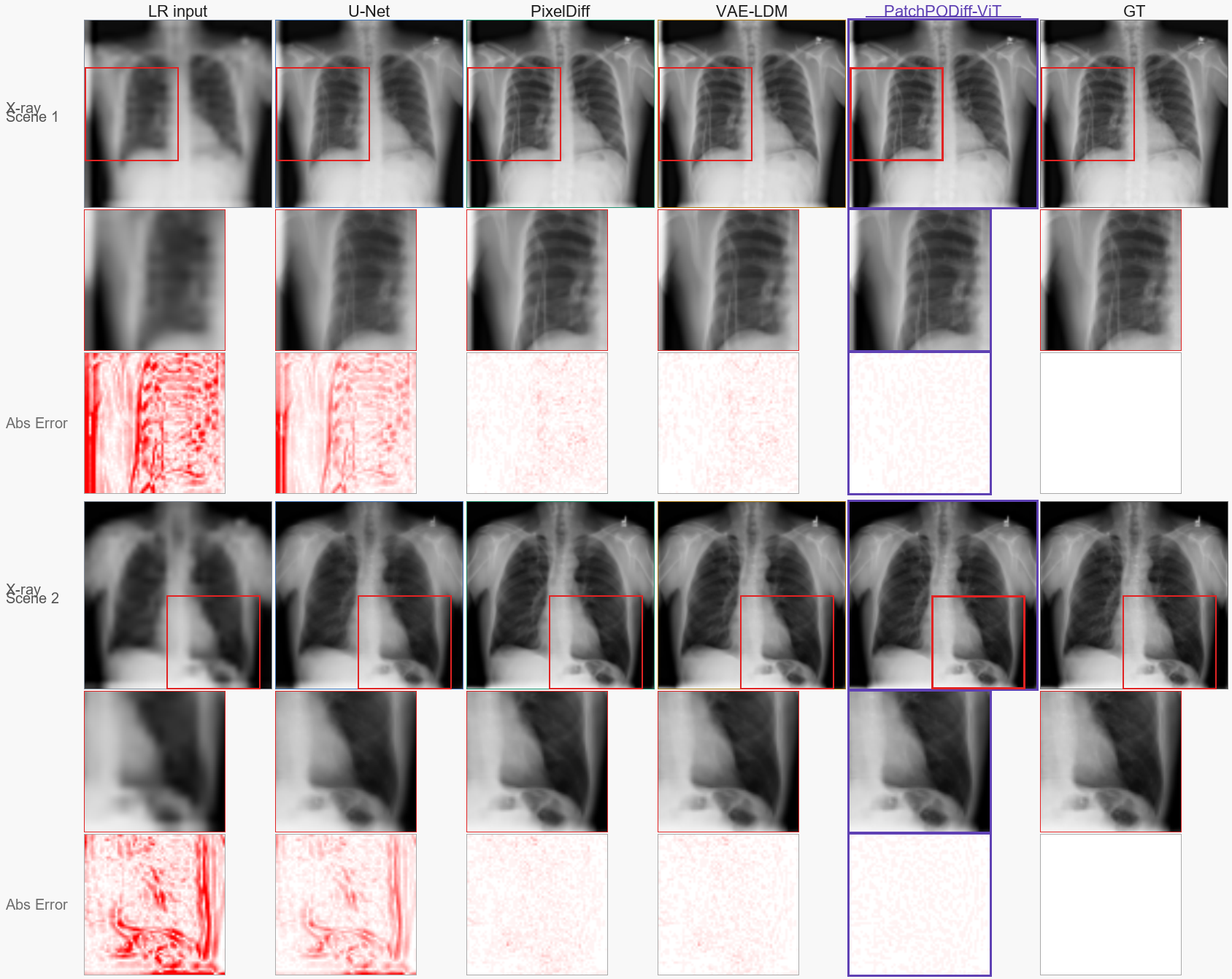}
  \caption{Qualitative comparison on Chest X-ray. Each scene shows (top) full reconstruction, (middle) zoomed region, and (bottom) absolute error. Columns correspond to LR input, U-Net, PixelDiff, VAE-LDM, Patch-PODiff-ViT, and ground truth. Patch-PODiff-ViT recovers sharper anatomical structures with lower reconstruction error, particularly along rib edges and lung boundaries.}
  \label{fig:xray_comparison}
\end{figure}

\begin{figure}[htbp]
  \centering
  \includegraphics[width=\columnwidth]{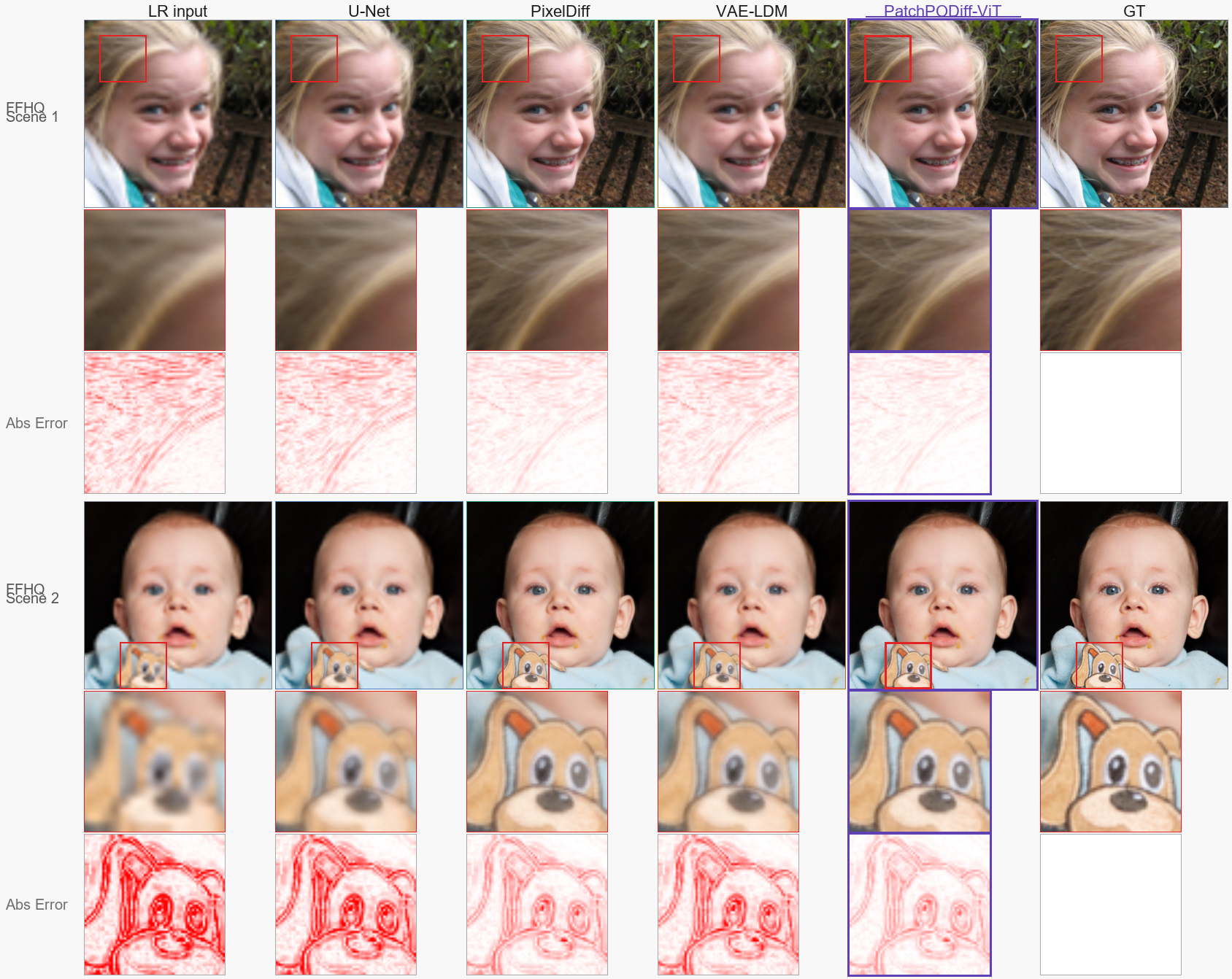}
  \caption{Qualitative comparison on FFHQ. Each scene shows (top) full reconstruction, (middle) zoomed region, and (bottom) absolute error. Columns correspond to LR input, U-Net, PixelDiff, VAE-LDM, Patch-PODiff-ViT, and ground truth. Patch-PODiff-ViT preserves fine-scale texture more effectively, with lower error in high-detail regions such as hair and facial features (Scene 1). Gains are more modest in regions with sharp synthetic boundaries such as the cartoon detail in Scene 2.}
  \label{fig:ffhq_comparison}
\end{figure}
\section{Coverage and Calibration Details}
\label{app:coverage}
\subsection{CRPS and MACE Summary}
\label{app:crps_mace}
Table~\ref{tab:crps_mace} reports CRPS and MACE for all models using $M=100$ ensemble samples, with SST averaged over all 2011 test days. Patch-PODiff-ViT achieves the lowest CRPS on SST and Chest X-ray, indicating sharper predictive distributions in these domains. On FFHQ, VAE-LDM attains a lower CRPS ($0.00892$) than Patch-PODiff-ViT ($0.0116$), but this comes with a higher MACE ($0.0121$ vs.\ $0.0084$), indicating improved sharpness at the cost of reduced calibration. Across all datasets, Patch-PODiff-ViT consistently achieves the lowest MACE, demonstrating better calibration of predictive uncertainty.
\begin{table}[t]
\centering
\caption{CRPS (mean $\pm$ standard deviation) and MACE across datasets. Lower values indicate better performance. CRPS is computed over $M=100$ ensemble samples, while MACE is evaluated against nominal coverage levels.}
\label{tab:crps_mace}
\footnotesize
\setlength{\tabcolsep}{4pt}
\begin{tabular}{l cc cc cc}
\toprule
& \multicolumn{2}{c}{SST} & \multicolumn{2}{c}{Chest X-ray} & \multicolumn{2}{c}{FFHQ} \\
\cmidrule(lr){2-3}\cmidrule(lr){4-5}\cmidrule(lr){6-7}
Model & CRPS & MACE & CRPS & MACE & CRPS & MACE \\
\midrule
Patch-PODiff-ViT & $0.00186\,(0.00258)$ & \textbf{0.0080} & $0.00360\,(0.00060)$ & \textbf{0.0046} & $0.01160\,(0.00160)$ & \textbf{0.0084} \\
DiT              & $0.00270\,(0.00325)$ & 0.0216 & $0.00498\,(0.00115)$ & 0.0099 & $0.02680\,(0.00548)$ & 0.0143 \\
VAE-LDM          & $0.00206\,(0.00399)$ & 0.0159 & $0.01520\,(0.00235)$ & 0.0114 & $0.00892\,(0.00109)$ & 0.0121 \\
PixelDiff        & $0.01900\,(0.00650)$ & 0.0193 & $0.04890\,(0.00658)$ & 0.0206 & $0.01009\,(0.00210)$ & 0.0281 \\
\bottomrule
\end{tabular}
\end{table}

\subsection{Full Coverage Tables}
\label{app:coverage_full}
Tables~\ref{tab:coverage_sst}--\ref{tab:coverage_ffhq} report empirical coverage at nominal levels $\{50, 60, 70, 80, 90, 95, 99\}\%$ for all models across SST, Chest X-ray, and FFHQ. For each level, we report the empirical coverage along with the deviation $\Delta = \mathrm{empirical} - \mathrm{nominal}$, where positive values indicate over-coverage and negative values indicate under-coverage.
Across all datasets, Patch-PODiff-ViT exhibits the smallest deviations from nominal coverage, indicating consistently well-calibrated uncertainty estimates. In contrast, DiT and VAE-LDM tend to show moderate over- or under-coverage depending on the dataset, while PixelDiff exhibits larger deviations, particularly at higher confidence levels.
\begin{table}[t]
\centering
\caption{Empirical coverage at each nominal level for SST. $\Delta = \mathrm{empirical} - \mathrm{nominal}$; positive values indicate over-coverage.}
\label{tab:coverage_sst}
\footnotesize
\setlength{\tabcolsep}{4pt}
\begin{tabular}{l rr rr rr rr}
\toprule
& \multicolumn{2}{c}{Patch-PODiff-ViT} & \multicolumn{2}{c}{DiT}
& \multicolumn{2}{c}{VAE-LDM} & \multicolumn{2}{c}{PixelDiff} \\
\cmidrule(lr){2-3}\cmidrule(lr){4-5}\cmidrule(lr){6-7}\cmidrule(lr){8-9}
Nominal & Empirical & $\Delta$ & Empirical & $\Delta$ & Empirical & $\Delta$ & Empirical & $\Delta$ \\
\midrule
50\% & 0.4882 & $-0.012$ & 0.4662 & $-0.034$ & 0.4645 & $-0.036$ & 0.4918 & $-0.008$ \\
60\% & 0.5804 & $-0.020$ & 0.5700 & $-0.030$ & 0.5690 & $-0.031$ & 0.5952 & $-0.005$ \\
70\% & 0.7058 & $+0.006$ & 0.7176 & $+0.018$ & 0.6928 & $-0.007$ & 0.6808 & $-0.019$ \\
80\% & 0.8058 & $+0.006$ & 0.8226 & $+0.023$ & 0.8139 & $+0.014$ & 0.8022 & $+0.002$ \\
90\% & 0.8999 & $-0.000$ & 0.8883 & $-0.012$ & 0.8905 & $-0.010$ & 0.9085 & $+0.009$ \\
95\% & 0.9510 & $+0.001$ & 0.9370 & $-0.013$ & 0.9617 & $+0.012$ & 0.9150 & $-0.035$ \\
99\% & 0.9918 & $+0.002$ & 0.9726 & $-0.017$ & 0.9878 & $-0.002$ & 0.9326 & $-0.057$ \\
\bottomrule
\end{tabular}
\end{table}

\begin{table}[t]
\centering
\caption{Empirical coverage at each nominal level for Chest X-ray.}
\label{tab:coverage_xray}
\footnotesize
\setlength{\tabcolsep}{4pt}
\begin{tabular}{l rr rr rr rr}
\toprule
& \multicolumn{2}{c}{Patch-PODiff-ViT} & \multicolumn{2}{c}{DiT}
& \multicolumn{2}{c}{VAE-LDM} & \multicolumn{2}{c}{PixelDiff} \\
\cmidrule(lr){2-3}\cmidrule(lr){4-5}\cmidrule(lr){6-7}\cmidrule(lr){8-9}
Nominal & Empirical & $\Delta$ & Empirical & $\Delta$ & Empirical & $\Delta$ & Empirical & $\Delta$ \\
\midrule
50\% & 0.5089 & $+0.009$ & 0.5171 & $+0.017$ & 0.5169 & $+0.017$ & 0.5001 & $+0.000$ \\
60\% & 0.6072 & $+0.007$ & 0.6002 & $+0.000$ & 0.6178 & $+0.018$ & 0.6302 & $+0.030$ \\
70\% & 0.6980 & $-0.002$ & 0.6914 & $-0.009$ & 0.7081 & $+0.008$ & 0.6687 & $-0.031$ \\
80\% & 0.8033 & $+0.003$ & 0.8106 & $+0.011$ & 0.8149 & $+0.015$ & 0.7919 & $-0.008$ \\
90\% & 0.9088 & $+0.009$ & 0.9170 & $+0.017$ & 0.9023 & $+0.002$ & 0.9350 & $+0.035$ \\
95\% & 0.9493 & $-0.001$ & 0.9404 & $-0.010$ & 0.9400 & $-0.010$ & 0.9654 & $+0.015$ \\
99\% & 0.9911 & $+0.001$ & 0.9962 & $+0.006$ & 0.9798 & $-0.010$ & 0.9659 & $-0.024$ \\
\bottomrule
\end{tabular}
\end{table}

\begin{table}[t]
\centering
\caption{Empirical coverage at each nominal level for FFHQ.}
\label{tab:coverage_ffhq}
\footnotesize
\setlength{\tabcolsep}{4pt}
\begin{tabular}{l rr rr rr rr}
\toprule
& \multicolumn{2}{c}{Patch-PODiff-ViT} & \multicolumn{2}{c}{DiT}
& \multicolumn{2}{c}{VAE-LDM} & \multicolumn{2}{c}{PixelDiff} \\
\cmidrule(lr){2-3}\cmidrule(lr){4-5}\cmidrule(lr){6-7}\cmidrule(lr){8-9}
Nominal & Empirical & $\Delta$ & Empirical & $\Delta$ & Empirical & $\Delta$ & Empirical & $\Delta$ \\
\midrule
50\% & 0.4889 & $-0.011$ & 0.4954 & $-0.005$ & 0.4827 & $-0.017$ & 0.5144 & $+0.014$ \\
60\% & 0.6172 & $+0.017$ & 0.6257 & $+0.026$ & 0.6264 & $+0.026$ & 0.6657 & $+0.066$ \\
70\% & 0.6990 & $-0.001$ & 0.7062 & $+0.006$ & 0.7080 & $+0.008$ & 0.7342 & $+0.034$ \\
80\% & 0.8103 & $+0.010$ & 0.8196 & $+0.020$ & 0.8039 & $+0.004$ & 0.7919 & $-0.008$ \\
90\% & 0.9091 & $+0.009$ & 0.9173 & $+0.017$ & 0.9204 & $+0.020$ & 0.9500 & $+0.050$ \\
95\% & 0.9411 & $-0.009$ & 0.9319 & $-0.018$ & 0.9500 & $+0.000$ & 0.9649 & $+0.015$ \\
99\% & 0.9891 & $-0.001$ & 0.9817 & $-0.008$ & 0.9816 & $-0.008$ & 0.9997 & $+0.010$ \\
\bottomrule
\end{tabular}
\end{table}

\section{Uncertainty Map Comparisons}
\label{app:uq_maps}
\begin{figure}[t]
  \centering
  \includegraphics[width=\columnwidth]{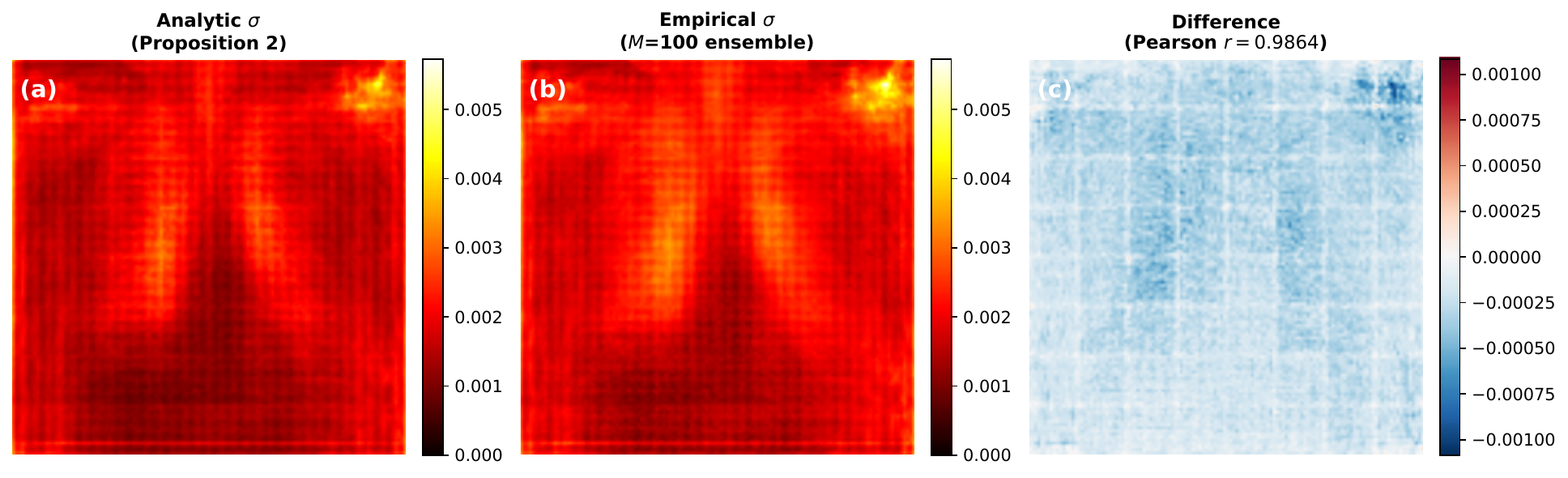}
  \caption{Analytic versus empirical uncertainty on Chest X-ray. Left: analytic standard deviation from Proposition~\ref{prop:patch_uq}. Middle: empirical standard deviation computed from $M=100$ ensemble samples. Right: difference map with Pearson correlation $r = 0.986$. High uncertainty is concentrated along lung boundaries and rib edges.}
  \label{fig:uq_xray}
\end{figure}

\begin{figure}[t]
  \centering
  \includegraphics[width=\columnwidth]{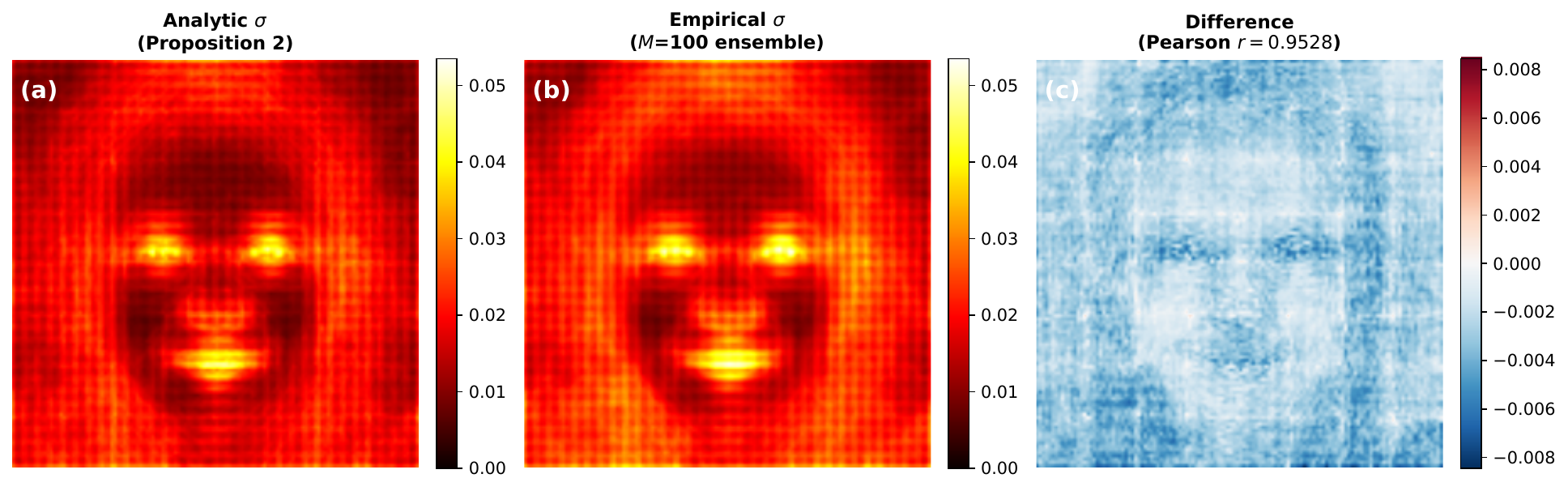}
  \caption{Analytic versus empirical uncertainty on FFHQ. Left: analytic standard deviation from Proposition~\ref{prop:patch_uq}. Middle: empirical standard deviation computed from $M=100$ ensemble samples. Right: difference map with Pearson correlation $r = 0.953$. The faint grid pattern reflects patch-boundary effects from independent patchwise covariance propagation and deterministic stitching.}
  \label{fig:uq_ffhq}
\end{figure}
Figures~\ref{fig:uq_xray} and~\ref{fig:uq_ffhq} extend the analytic versus empirical uncertainty comparison from SST (Figure~\ref{fig:uq_maps_sst}) to Chest X-ray and FFHQ, further supporting the calibration results in Section~\ref{sec:results_uq}.

For Chest X-ray, analytic and empirical maps agree strongly ($r=0.986$), with high uncertainty concentrated along lung boundaries and rib edges, where structural variability and reconstruction difficulty are greatest. For FFHQ, agreement remains high ($r=0.9528$), with uncertainty concentrated around fine-scale features such as eyes, mouth, and hair.
The faint grid pattern in the FFHQ difference map reflects patch-boundary effects from independent patchwise covariance propagation and deterministic stitching. While cross-patch covariance affects off-diagonal entries of the global covariance, pointwise variances are determined by the propagated within-patch covariance and any aggregation or post-processing applied after decoding. These effects are most visible for FFHQ, where texture varies strongly across neighboring patches.

\begin{figure}[t]
  \centering
  \includegraphics[width=\columnwidth]{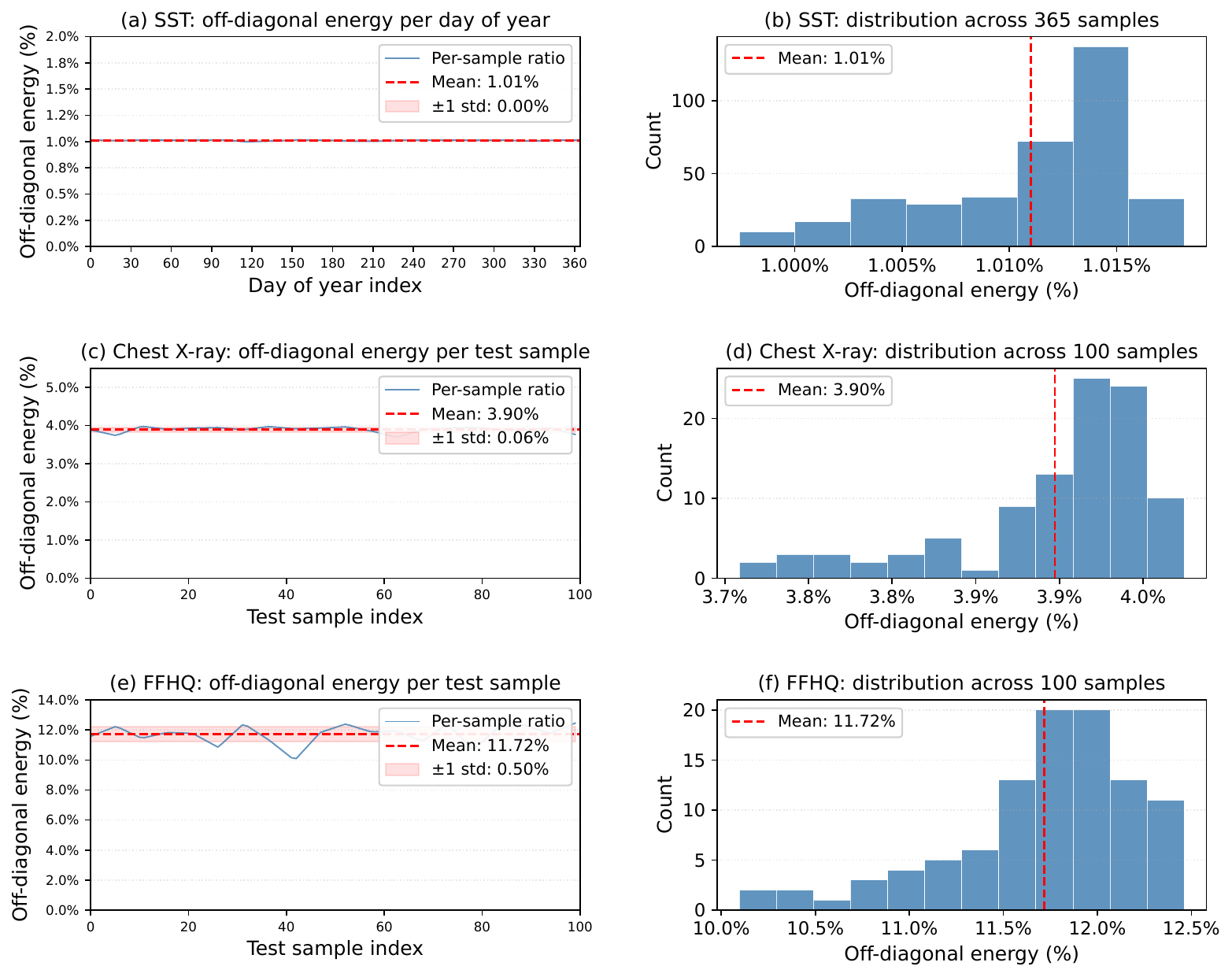}
  \caption{Cross-patch covariance energy. SST and Chest X-ray show very low off-diagonal energy, while FFHQ shows a moderate but still limited ratio, consistent with stronger texture-driven cross-patch dependence.}
  \label{fig:cross_patch_covariance}
\end{figure}
%
%

\section{Cross-Patch Covariance Analysis}
\label{app:cross_patch_covariance}
Proposition~\ref{prop:patch_uq} uses a block-diagonal covariance approximation that neglects cross-patch covariance. To quantify this, we measure the fraction of latent covariance energy in off-diagonal cross-patch blocks.
Figure~\ref{fig:cross_patch_covariance} shows low off-diagonal energy for SST and Chest X-ray, with mean ratios of $1.01\%$ and $3.90\%$, respectively, and a moderate but still limited ratio for FFHQ ($11.72\%$). This supports the block-diagonal approximation for structured physical and medical fields, while explaining the mild FFHQ patch-boundary discrepancies observed in uncertainty maps.

\section{Ablation Studies}
\label{app:Other_ablations}



\subsection{Energy Threshold}
\label{app:eta_ablation}
Table~\ref{tab:eta_ablation} evaluates $\eta \in \{0.99,0.95,0.90\}$. Lower $\eta$ reduces retained modes and latent dimensionality, but degrades reconstruction. Pixel metrics change moderately from $\eta=0.99$ to $0.95$, whereas FID worsens sharply at $\eta=0.90$ (e.g., $3.986 \rightarrow 12.14$ on SST and $9.17 \rightarrow 27.92$ on FFHQ), supporting $\eta=0.99$ as the default.
\begin{table}[htbp]
\centering
\caption{Effect of energy threshold $\eta$ on reconstruction quality. Lower is better for RMSE, LPIPS, FID; higher for PSNR, SSIM.}
\label{tab:eta_ablation}
\footnotesize
\setlength{\tabcolsep}{4pt}
\begin{tabular}{l l ccccc}
\toprule
Dataset & $\eta$ & RMSE & PSNR & SSIM & LPIPS & FID \\
\midrule
SST & 0.99 (default) & \textbf{0.0030} & \textbf{50.43} & \textbf{0.9888} & \textbf{0.0132} & \textbf{3.99} \\
    & 0.95           & 0.0059 & 43.15 & 0.9802 & 0.0258 & 4.15 \\
    & 0.90           & 0.0062 & 43.06 & 0.9798 & 0.0269 & 12.14 \\
\midrule
Chest X-ray & 0.99 (default) & \textbf{0.0065} & \textbf{42.98} & \textbf{0.9885} & \textbf{0.0201} & \textbf{6.02} \\
            & 0.95           & 0.0127 & 42.02 & 0.9799 & 0.0394 & 6.26 \\
            & 0.90           & 0.0133 & 41.93 & 0.9791 & 0.0411 & 18.32 \\
\midrule
FFHQ & 0.99 (default) & \textbf{0.0109} & \textbf{39.16} & \textbf{0.9523} & \textbf{0.0301} & \textbf{9.17} \\
     & 0.95           & 0.0214 & 38.20 & 0.9431 & 0.0589 & 9.54 \\
     & 0.90           & 0.0223 & 38.11 & 0.9410 & 0.0615 & 27.92 \\
\bottomrule
\end{tabular}
\end{table}

\subsection{Patch Size}
\label{app:patch_ablation}

Table~\ref{tab:patch_ablation} compares patch sizes $8\times8$, $16\times16$, and $32\times32$. The $16\times16$ setting consistently achieves the best RMSE and FID across datasets. Smaller patches increase sequence length and reduce per-patch compressibility, while larger patches require more POD modes to retain the same energy, offsetting dimensionality reduction. Overall, $16\times16$ provides the best trade-off between locality, compression, and sequence length.
\begin{table}[htbp]
\centering
\caption{Effect of patch size on reconstruction quality. Lower is better for RMSE and FID; higher for PSNR and SSIM.}
\label{tab:patch_ablation}
\footnotesize
\setlength{\tabcolsep}{5pt}
\begin{tabular}{l l cccc}
\toprule
Dataset & Patch & RMSE & PSNR & SSIM & FID \\
\midrule
SST & $8\times8$   & 0.0040 & 42.93 & 0.9811 & 5.112 \\
    & $16\times16$ (default) & \textbf{0.0030} & \textbf{50.43} & \textbf{0.9888} & \textbf{3.986} \\
    & $32\times32$ & 0.0039 & 42.59 & 0.9836 & 4.011 \\
\midrule
Chest X-ray & $8\times8$   & 0.0078 & 38.75 & 0.9799 & 7.156 \\
            & $16\times16$ (default) & \textbf{0.0065} & \textbf{42.98} & \textbf{0.9885} & \textbf{6.015} \\
            & $32\times32$ & 0.0070 & 42.57 & 0.9822 & 6.589 \\
\midrule
FFHQ & $8\times8$   & 0.0185 & 37.95 & 0.9399 & 9.999 \\
     & $16\times16$ (default) & \textbf{0.0109} & \textbf{39.16} & \textbf{0.9523} & \textbf{9.169} \\
     & $32\times32$ & 0.0187 & 37.99 & 0.9455 & 9.530 \\
\bottomrule
\end{tabular}
\end{table}

\subsection{Denoiser Architecture: ViT vs.\ Per-Token MLP}
\label{app:mlp_ablation}
Table~\ref{tab:mlp_ablation} compares the ViT denoiser with a per-token MLP that processes patches independently. Removing cross-patch communication substantially degrades performance, increasing FID by $2.2$--$3.2\times$ and worsening RMSE across all datasets. This confirms that self-attention is important for modeling inter-patch dependencies.
\begin{table}[htbp]
\centering
\caption{ViT denoiser versus per-token MLP. Lower is better for RMSE, LPIPS, FID; higher for PSNR, SSIM.}
\label{tab:mlp_ablation}
\footnotesize
\setlength{\tabcolsep}{4pt}
\begin{tabular}{l l ccccc}
\toprule
Dataset & Denoiser & RMSE & PSNR & SSIM & LPIPS & FID \\
\midrule
SST & ViT & \textbf{0.0030} & \textbf{50.43} & \textbf{0.9888} & \textbf{0.0132} & \textbf{3.99} \\
    & MLP & 0.0698 & 37.72 & 0.9456 & 0.0682 & 12.59 \\
\midrule
Chest X-ray & ViT & \textbf{0.0065} & \textbf{42.98} & \textbf{0.9885} & \textbf{0.0201} & \textbf{6.02} \\
            & MLP & 0.0431 & 34.33 & 0.9555 & 0.0982 & 13.22 \\
\midrule
FFHQ & ViT & \textbf{0.0109} & \textbf{39.16} & \textbf{0.9523} & \textbf{0.0301} & \textbf{9.17} \\
     & MLP & 0.1182 & 30.07 & 0.9001 & 0.1011 & 28.57 \\
\bottomrule
\end{tabular}
\end{table}
%




\subsection{Effect of Ensemble Size}
\label{app:ensemble_size}

\begin{table}[htbp]
\centering
\caption{MACE and empirical coverage at selected nominal levels as a function of ensemble size $M$, averaged over all 2011 SST test days.}
\label{tab:ensemble_size}
\footnotesize
\begin{tabular}{lcccc}
\toprule
& $M=25$ & $M=50$ & $M=100$ & $M=200$ \\
\midrule
MACE & 0.0191 & 0.0091 & 0.0080 & 0.0081 \\
\midrule
50\% & 0.527 & 0.520 & 0.488 & 0.480 \\
70\% & 0.736 & 0.719 & 0.706 & 0.709 \\
90\% & 0.884 & 0.900 & 0.900 & 0.899 \\
95\% & 0.936 & 0.952 & 0.951 & 0.941 \\
99\% & 0.989 & 0.986 & 0.992 & 0.991 \\
\bottomrule
\end{tabular}
\end{table}
Table~\ref{tab:ensemble_size} reports MACE and selected empirical coverage levels for different ensemble sizes $M$, averaged over all 2011 SST test days. Calibration improves from $M=25$ to $M=50$, but gains beyond $M=100$ are marginal, indicating diminishing returns. We therefore use $M=100$ as the default, balancing ensemble inference cost and calibration accuracy. Notably, even $M=50$ achieves MACE below 0.01, suggesting that useful uncertainty estimates remain available under tighter compute budgets.
\subsection{Effect of Compression Level}
\label{app:compressionLevel}
To disentangle POD compression from POD structure, we evaluate a variant retaining all K=256 modes on SST. Despite identical ViT architecture and linear POD structure, this uncompressed variant yields RMSE 0.0046 and FID 5.18, worse than Patch-PODiff-ViT (RMSE 0.0030, FID 3.986) and comparable to DiT (RMSE 0.0040, FID 5.01). This confirms that variance-ordered truncation to the data's intrinsic dimensionality — not merely the linearity of the decoder — is the primary driver of reconstruction quality.


\section{Advection-Dominated Regime Analysis}
\label{app:advection}
\begin{table}[htbp]
\centering
\caption{Median $K_{99}$ and reconstruction error across datasets.}
\label{tab:advection}
\footnotesize
\setlength{\tabcolsep}{5pt}
\begin{tabular}{lccc}
\toprule
Dataset & Pe & $K_{99}$ & RMSE \\
\midrule
SST               & --       & 2   & 0.0030 \\
Chest X-ray       & --         & 11  & 0.0065 \\
FFHQ              & --         & 26  & 0.0109 \\
Adv.-dom.         & $O(10^6)$ & 150 & 0.0291 \\
\midrule
U-Net (adv.-dom.) & $O(10^6)$ & --  & 0.1985 \\
\bottomrule
\end{tabular}
\end{table}
We test Patch-PODiff-ViT in a strongly advection-dominated setting to validate the limitation in Section~\ref{sec:limitations}. Synthetic fields are generated from the 2D convection--diffusion equation with $\nu \in [10^{-5},10^{-4}]$ and velocities in $[-1,1]^2$, yielding $\mathrm{Pe}=O(10^6)$, on a $128\times128$ grid with $4\times$ downsampling.
Table~\ref{tab:advection} shows that this regime requires $K_{99}=150$ modes, far more than the main datasets, indicating much slower spectral decay. Patch-PODiff-ViT still achieves $\mathrm{RMSE}=0.0291$, improving over U-Net by $6.8\times$, but the larger latent size reduces the compression advantage. Thus, the method is most efficient when local patch spectra decay rapidly. The highly advective regimes may require adaptive bases, overlapping aggregation, or hybrid local--global covariance modeling.

\newpage
\section*{NeurIPS Paper Checklist}
\begin{enumerate}

\item {\bf Claims}
    \item[] Question: Do the main claims made in the abstract and introduction accurately reflect the paper's contributions and scope?
    \item[] Answer: \answerYes{} 
    \item[] Justification: The abstract and introduction summarize the proposed Patch-PODiff-ViT framework, its structured POD latent representation, uncertainty propagation, computational efficiency, and empirical evaluation. The claims are supported by the theoretical results, experiments, ablations, and limitations discussed in the paper.
    \item[] Guidelines:
    \begin{itemize}
        \item The answer \answerNA{} means that the abstract and introduction do not include the claims made in the paper.
        \item The abstract and/or introduction should clearly state the claims made, including the contributions made in the paper and important assumptions and limitations. A \answerNo{} or \answerNA{} answer to this question will not be perceived well by the reviewers. 
        \item The claims made should match theoretical and experimental results, and reflect how much the results can be expected to generalize to other settings. 
        \item It is fine to include aspirational goals as motivation as long as it is clear that these goals are not attained by the paper. 
    \end{itemize}

\item {\bf Limitations}
    \item[] Question: Does the paper discuss the limitations of the work performed by the authors?
    \item[] Answer: \answerYes{} 
    \item[] Justification: Section~\ref{sec:limitations} discusses limitations related to slow spectral decay, fixed POD bases, cross-patch covariance approximations, and discarded-mode uncertainty. Appendix~\ref{app:advection} further evaluates an advection-dominated regime where the compression advantage is reduced.
    \item[] Guidelines:
    \begin{itemize}
        \item The answer \answerNA{} means that the paper has no limitation while the answer \answerNo{} means that the paper has limitations, but those are not discussed in the paper. 
        \item The authors are encouraged to create a separate ``Limitations'' section in their paper.
        \item The paper should point out any strong assumptions and how robust the results are to violations of these assumptions (e.g., independence assumptions, noiseless settings, model well-specification, asymptotic approximations only holding locally). The authors should reflect on how these assumptions might be violated in practice and what the implications would be.
        \item The authors should reflect on the scope of the claims made, e.g., if the approach was only tested on a few datasets or with a few runs. In general, empirical results often depend on implicit assumptions, which should be articulated.
        \item The authors should reflect on the factors that influence the performance of the approach. For example, a facial recognition algorithm may perform poorly when image resolution is low or images are taken in low lighting. Or a speech-to-text system might not be used reliably to provide closed captions for online lectures because it fails to handle technical jargon.
        \item The authors should discuss the computational efficiency of the proposed algorithms and how they scale with dataset size.
        \item If applicable, the authors should discuss possible limitations of their approach to address problems of privacy and fairness.
        \item While the authors might fear that complete honesty about limitations might be used by reviewers as grounds for rejection, a worse outcome might be that reviewers discover limitations that aren't acknowledged in the paper. The authors should use their best judgment and recognize that individual actions in favor of transparency play an important role in developing norms that preserve the integrity of the community. Reviewers will be specifically instructed to not penalize honesty concerning limitations.
    \end{itemize}

\item {\bf Theory assumptions and proofs}
    \item[] Question: For each theoretical result, does the paper provide the full set of assumptions and a complete (and correct) proof?
    \item[] Answer: \answerYes{} 
    \item[] Justification: The assumptions and statements for Proposition~\ref{prop:patch_error} and Proposition~\ref{prop:patch_uq} are provided in the main text, with complete proofs given in Appendices~\ref{app:proof1} and~\ref{app:proof2}.
    \item[] Guidelines:
    \begin{itemize}
        \item The answer \answerNA{} means that the paper does not include theoretical results. 
        \item All the theorems, formulas, and proofs in the paper should be numbered and cross-referenced.
        \item All assumptions should be clearly stated or referenced in the statement of any theorems.
        \item The proofs can either appear in the main paper or the supplemental material, but if they appear in the supplemental material, the authors are encouraged to provide a short proof sketch to provide intuition. 
        \item Inversely, any informal proof provided in the core of the paper should be complemented by formal proofs provided in appendix or supplemental material.
        \item Theorems and Lemmas that the proof relies upon should be properly referenced. 
    \end{itemize}

    \item {\bf Experimental result reproducibility}
    \item[] Question: Does the paper fully disclose all the information needed to reproduce the main experimental results of the paper to the extent that it affects the main claims and/or conclusions of the paper (regardless of whether the code and data are provided or not)?
    \item[] Answer: \answerYes{} 
    \item[] Justification: The paper specifies the datasets, train/validation/test splits, preprocessing, POD basis construction, model architectures, training hyperparameters, sampling settings, and evaluation metrics. The processed SST benchmark fields will be released with documentation and access terms consistent with the original data providers, while X-ray and FFHQ use publicly available datasets.
    \item[] Guidelines:
    \begin{itemize}
        \item The answer \answerNA{} means that the paper does not include experiments.
        \item If the paper includes experiments, a \answerNo{} answer to this question will not be perceived well by the reviewers: Making the paper reproducible is important, regardless of whether the code and data are provided or not.
        \item If the contribution is a dataset and\slash or model, the authors should describe the steps taken to make their results reproducible or verifiable. 
        \item Depending on the contribution, reproducibility can be accomplished in various ways. For example, if the contribution is a novel architecture, describing the architecture fully might suffice, or if the contribution is a specific model and empirical evaluation, it may be necessary to either make it possible for others to replicate the model with the same dataset, or provide access to the model. In general. releasing code and data is often one good way to accomplish this, but reproducibility can also be provided via detailed instructions for how to replicate the results, access to a hosted model (e.g., in the case of a large language model), releasing of a model checkpoint, or other means that are appropriate to the research performed.
        \item While NeurIPS does not require releasing code, the conference does require all submissions to provide some reasonable avenue for reproducibility, which may depend on the nature of the contribution. For example
        \begin{enumerate}
            \item If the contribution is primarily a new algorithm, the paper should make it clear how to reproduce that algorithm.
            \item If the contribution is primarily a new model architecture, the paper should describe the architecture clearly and fully.
            \item If the contribution is a new model (e.g., a large language model), then there should either be a way to access this model for reproducing the results or a way to reproduce the model (e.g., with an open-source dataset or instructions for how to construct the dataset).
            \item We recognize that reproducibility may be tricky in some cases, in which case authors are welcome to describe the particular way they provide for reproducibility. In the case of closed-source models, it may be that access to the model is limited in some way (e.g., to registered users), but it should be possible for other researchers to have some path to reproducing or verifying the results.
        \end{enumerate}
    \end{itemize}

\item {\bf Open access to data and code}
    \item[] Question: Does the paper provide open access to the data and code, with sufficient instructions to faithfully reproduce the main experimental results, as described in supplemental material?
    \item[] Answer: \answerNo{} 
    \item[] Justification: ChestX-ray14 and FFHQ are publicly available. Upon acceptance, we will release the complete Patch-PODiff-ViT codebase, preprocessing pipelines, trained model configurations, evaluation scripts, and the processed SST benchmark fields needed to reproduce the main results. These will be hosted in a public repository or university-supported public link with documentation and reproduction instructions.
    \item[] Guidelines:
    \begin{itemize}
        \item The answer \answerNA{} means that paper does not include experiments requiring code.
        \item Please see the NeurIPS code and data submission guidelines (\url{https://neurips.cc/public/guides/CodeSubmissionPolicy}) for more details.
        \item While we encourage the release of code and data, we understand that this might not be possible, so \answerNo{} is an acceptable answer. Papers cannot be rejected simply for not including code, unless this is central to the contribution (e.g., for a new open-source benchmark).
        \item The instructions should contain the exact command and environment needed to run to reproduce the results. See the NeurIPS code and data submission guidelines (\url{https://neurips.cc/public/guides/CodeSubmissionPolicy}) for more details.
        \item The authors should provide instructions on data access and preparation, including how to access the raw data, preprocessed data, intermediate data, and generated data, etc.
        \item The authors should provide scripts to reproduce all experimental results for the new proposed method and baselines. If only a subset of experiments are reproducible, they should state which ones are omitted from the script and why.
        \item At submission time, to preserve anonymity, the authors should release anonymized versions (if applicable).
        \item Providing as much information as possible in supplemental material (appended to the paper) is recommended, but including URLs to data and code is permitted.
    \end{itemize}

\item {\bf Experimental setting/details}
    \item[] Question: Does the paper specify all the training and test details (e.g., data splits, hyperparameters, how they were chosen, type of optimizer) necessary to understand the results?
    \item[] Answer: \answerYes{} 
    \item[] Justification: Sections~\ref{sec:datasets}--\ref{sec:metrics} describe the datasets, preprocessing, baselines, optimizer, learning rate, diffusion schedule, sampling steps, ensemble size, and evaluation metrics. Additional implementation details and ablations are provided in the appendices.
    \item[] Guidelines:
    \begin{itemize}
        \item The answer \answerNA{} means that the paper does not include experiments.
        \item The experimental setting should be presented in the core of the paper to a level of detail that is necessary to appreciate the results and make sense of them.
        \item The full details can be provided either with the code, in appendix, or as supplemental material.
    \end{itemize}

\item {\bf Experiment statistical significance}
    \item[] Question: Does the paper report error bars suitably and correctly defined or other appropriate information about the statistical significance of the experiments?
    \item[] Answer: \answerYes{} 
    \item[] Justification: Reconstruction metrics are reported as mean and standard deviation across three seed runs in Appendix~\ref{app:extended_results}. CRPS is also reported with standard deviation, and empirical coverage is reported across multiple nominal levels.
    \item[] Guidelines:
    \begin{itemize}
        \item The answer \answerNA{} means that the paper does not include experiments.
        \item The authors should answer \answerYes{} if the results are accompanied by error bars, confidence intervals, or statistical significance tests, at least for the experiments that support the main claims of the paper.
        \item The factors of variability that the error bars are capturing should be clearly stated (for example, train/test split, initialization, random drawing of some parameter, or overall run with given experimental conditions).
        \item The method for calculating the error bars should be explained (closed form formula, call to a library function, bootstrap, etc.)
        \item The assumptions made should be given (e.g., Normally distributed errors).
        \item It should be clear whether the error bar is the standard deviation or the standard error of the mean.
        \item It is OK to report 1-sigma error bars, but one should state it. The authors should preferably report a 2-sigma error bar than state that they have a 96\% CI, if the hypothesis of Normality of errors is not verified.
        \item For asymmetric distributions, the authors should be careful not to show in tables or figures symmetric error bars that would yield results that are out of range (e.g., negative error rates).
        \item If error bars are reported in tables or plots, the authors should explain in the text how they were calculated and reference the corresponding figures or tables in the text.
    \end{itemize}

\item {\bf Experiments compute resources}
    \item[] Question: For each experiment, does the paper provide sufficient information on the computer resources (type of compute workers, memory, time of execution) needed to reproduce the experiments?
    \item[] Answer: \answerYes{} 
    \item[] Justification: The paper reports hardware type, training time, peak GPU memory, parameter count, per-sample inference time, and ensemble inference time for the main diffusion models in Section~\ref{sec:efficiency_results}. These values provide the compute requirements needed to reproduce the main experiments.
    \item[] Guidelines:
    \begin{itemize}
        \item The answer \answerNA{} means that the paper does not include experiments.
        \item The paper should indicate the type of compute workers CPU or GPU, internal cluster, or cloud provider, including relevant memory and storage.
        \item The paper should provide the amount of compute required for each of the individual experimental runs as well as estimate the total compute. 
        \item The paper should disclose whether the full research project required more compute than the experiments reported in the paper (e.g., preliminary or failed experiments that didn't make it into the paper). 
    \end{itemize}
    
\item {\bf Code of ethics}
    \item[] Question: Does the research conducted in the paper conform, in every respect, with the NeurIPS Code of Ethics \url{https://neurips.cc/public/EthicsGuidelines}?
    \item[] Answer: \answerYes{} 
    \item[] Justification: The research uses established public or licensed datasets, does not involve new human-subject data collection, and does not deploy the proposed model in a real-world decision-making system. We have reviewed the NeurIPS Code of Ethics and believe the work conforms to it.
    \item[] Guidelines:
    \begin{itemize}
        \item The answer \answerNA{} means that the authors have not reviewed the NeurIPS Code of Ethics.
        \item If the authors answer \answerNo, they should explain the special circumstances that require a deviation from the Code of Ethics.
        \item The authors should make sure to preserve anonymity (e.g., if there is a special consideration due to laws or regulations in their jurisdiction).
    \end{itemize}

\item {\bf Broader impacts}
    \item[] Question: Does the paper discuss both potential positive societal impacts and negative societal impacts of the work performed?
    \item[] Answer: \answerYes{} 
    \item[] Justification: The work may support uncertainty-aware scientific and medical super-resolution by providing calibrated spatial confidence maps. However, generated high-resolution outputs should not be treated as direct observations in high-stakes settings and should require domain validation, uncertainty reporting, and expert oversight.
    \item[] Guidelines:
    \begin{itemize}
        \item The answer \answerNA{} means that there is no societal impact of the work performed.
        \item If the authors answer \answerNA{} or \answerNo, they should explain why their work has no societal impact or why the paper does not address societal impact.
        \item Examples of negative societal impacts include potential malicious or unintended uses (e.g., disinformation, generating fake profiles, surveillance), fairness considerations (e.g., deployment of technologies that could make decisions that unfairly impact specific groups), privacy considerations, and security considerations.
        \item The conference expects that many papers will be foundational research and not tied to particular applications, let alone deployments. However, if there is a direct path to any negative applications, the authors should point it out. For example, it is legitimate to point out that an improvement in the quality of generative models could be used to generate Deepfakes for disinformation. On the other hand, it is not needed to point out that a generic algorithm for optimizing neural networks could enable people to train models that generate Deepfakes faster.
        \item The authors should consider possible harms that could arise when the technology is being used as intended and functioning correctly, harms that could arise when the technology is being used as intended but gives incorrect results, and harms following from (intentional or unintentional) misuse of the technology.
        \item If there are negative societal impacts, the authors could also discuss possible mitigation strategies (e.g., gated release of models, providing defenses in addition to attacks, mechanisms for monitoring misuse, mechanisms to monitor how a system learns from feedback over time, improving the efficiency and accessibility of ML).
    \end{itemize}
    
\item {\bf Safeguards}
    \item[] Question: Does the paper describe safeguards that have been put in place for responsible release of data or models that have a high risk for misuse (e.g., pre-trained language models, image generators, or scraped datasets)?
    \item[] Answer: \answerNA{} 
    \item[] Justification: The paper does not release a high-risk foundation model, scraped dataset, or general-purpose image generator. FFHQ is used only as a standard benchmark for reconstruction evaluation, and the proposed method is not presented for identity recognition, surveillance, or unrestricted image generation.
    \item[] Guidelines:
    \begin{itemize}
        \item The answer \answerNA{} means that the paper poses no such risks.
        \item Released models that have a high risk for misuse or dual-use should be released with necessary safeguards to allow for controlled use of the model, for example by requiring that users adhere to usage guidelines or restrictions to access the model or implementing safety filters. 
        \item Datasets that have been scraped from the Internet could pose safety risks. The authors should describe how they avoided releasing unsafe images.
        \item We recognize that providing effective safeguards is challenging, and many papers do not require this, but we encourage authors to take this into account and make a best faith effort.
    \end{itemize}

\item {\bf Licenses for existing assets}
    \item[] Question: Are the creators or original owners of assets (e.g., code, data, models), used in the paper, properly credited and are the license and terms of use explicitly mentioned and properly respected?
    \item[] Answer: \answerYes{} 
    \item[] Justification: Existing datasets and methods are cited in the paper. ChestX-ray14 is used under the NIH
Clinical Center terms with required attribution; FFHQ is used under its Creative Commons BY-NC-SA 4.0 license; ROMS is distributed under an MIT/X-style license; and ACCESS-S2 data are used under Bureau of Meteorology data-access terms. The processed SST benchmark fields will be released with appropriate attribution, documentation, and access terms consistent with the ROMS and ACCESS-S2 data providers.
    \item[] Guidelines:
    \begin{itemize}
        \item The answer \answerNA{} means that the paper does not use existing assets.
        \item The authors should cite the original paper that produced the code package or dataset.
        \item The authors should state which version of the asset is used and, if possible, include a URL.
        \item The name of the license (e.g., CC-BY 4.0) should be included for each asset.
        \item For scraped data from a particular source (e.g., website), the copyright and terms of service of that source should be provided.
        \item If assets are released, the license, copyright information, and terms of use in the package should be provided. For popular datasets, \url{paperswithcode.com/datasets} has curated licenses for some datasets. Their licensing guide can help determine the license of a dataset.
        \item For existing datasets that are re-packaged, both the original license and the license of the derived asset (if it has changed) should be provided.
        \item If this information is not available online, the authors are encouraged to reach out to the asset's creators.
    \end{itemize}

\item {\bf New assets}
    \item[] Question: Are new assets introduced in the paper well documented and is the documentation provided alongside the assets?
    \item[] Answer: \answerNA{} 
    \item[] Justification: The paper does not introduce a new dataset or benchmark asset. The codebase will be made available upon acceptance, with full documentation for reproducing the Patch-PODiff-ViT pipeline across all three datasets. Access to the SST benchmark fields will follow the pathway described in Item 5. 
    \item[] Guidelines:
    \begin{itemize}
        \item The answer \answerNA{} means that the paper does not release new assets.
        \item Researchers should communicate the details of the dataset\slash code\slash model as part of their submissions via structured templates. This includes details about training, license, limitations, etc. 
        \item The paper should discuss whether and how consent was obtained from people whose asset is used.
        \item At submission time, remember to anonymize your assets (if applicable). You can either create an anonymized URL or include an anonymized zip file.
    \end{itemize}

\item {\bf Crowdsourcing and research with human subjects}
    \item[] Question: For crowdsourcing experiments and research with human subjects, does the paper include the full text of instructions given to participants and screenshots, if applicable, as well as details about compensation (if any)? 
    \item[] Answer: \answerNA{} 
    \item[] Justification: The paper does not involve crowdsourcing, user studies, or new human-subject data collection.
    \item[] Guidelines:
    \begin{itemize}
        \item The answer \answerNA{} means that the paper does not involve crowdsourcing nor research with human subjects.
        \item Including this information in the supplemental material is fine, but if the main contribution of the paper involves human subjects, then as much detail as possible should be included in the main paper. 
        \item According to the NeurIPS Code of Ethics, workers involved in data collection, curation, or other labor should be paid at least the minimum wage in the country of the data collector. 
    \end{itemize}

\item {\bf Institutional review board (IRB) approvals or equivalent for research with human subjects}
    \item[] Question: Does the paper describe potential risks incurred by study participants, whether such risks were disclosed to the subjects, and whether Institutional Review Board (IRB) approvals (or an equivalent approval/review based on the requirements of your country or institution) were obtained?
    \item[] Answer: \answerNA{} 
    \item[] Justification: The paper does not involve new human-subject research, crowdsourcing, or interaction with study participants. ChestX-ray14 and FFHQ are pre-existing datasets used as benchmarks.
    \item[] Guidelines:
    \begin{itemize}
        \item The answer \answerNA{} means that the paper does not involve crowdsourcing nor research with human subjects.
        \item Depending on the country in which research is conducted, IRB approval (or equivalent) may be required for any human subjects research. If you obtained IRB approval, you should clearly state this in the paper. 
        \item We recognize that the procedures for this may vary significantly between institutions and locations, and we expect authors to adhere to the NeurIPS Code of Ethics and the guidelines for their institution. 
        \item For initial submissions, do not include any information that would break anonymity (if applicable), such as the institution conducting the review.
    \end{itemize}

\item {\bf Declaration of LLM usage}
    \item[] Question: Does the paper describe the usage of LLMs if it is an important, original, or non-standard component of the core methods in this research? Note that if the LLM is used only for writing, editing, or formatting purposes and does \emph{not} impact the core methodology, scientific rigor, or originality of the research, declaration is not required.
    \item[] Answer: \answerNA{} 
    \item[] Justification: The core method does not use LLMs as a model component, experimental tool, or source of scientific results.
    \item[] Guidelines:
    \begin{itemize}
        \item The answer \answerNA{} means that the core method development in this research does not involve LLMs as any important, original, or non-standard components.
        \item Please refer to our LLM policy in the NeurIPS handbook for what should or should not be described.
    \end{itemize}

\end{enumerate}

\end{document}